\pgfplotsset{compat = newest} 
\newtheorem*{rep@theorem}{\rep@title}
\newcommand{\newreptheorem}[2]{%
\newenvironment{rep#1}[1]{%
 \def\rep@title{#2 \ref{##1}}%
 \begin{rep@theorem}}%
 {\end{rep@theorem}}}
\newtheorem{theorem}{Theorem}[section]
\newtheorem{lemma}[theorem]{Lemma}
\newtheorem{prop}[theorem]{Proposition}
\newtheorem{cor}[theorem]{Corollary}
\theoremstyle{definition}
\newtheorem{defn}[theorem]{Definition}
\newtheorem{ass}{Assumption}
\newtheorem{example}{Example}
\newcommand {\State}{\mathcal{S}}
\newcommand {\A}{\mathcal{A}}
\newcommand {\R}{\mathbb{R}}
\newcommand {\E}{\mathbb{E}}
\newcommand {\N}{\mathbb{N}}
\newcommand {\T}{\mathcal{T}}
\DeclarePairedDelimiterX{\inp}[2]{\langle}{\rangle}{#1, #2}
\title{Conditions on Features for Temporal Difference-Like  Methods to Converge}
\author{
Marcus Hutter $^1$
\and
Samuel Yang-Zhao$^1$
\and
Sultan J. Majeed$^1$
\affiliations
$^1$College of Engineering and Computer Science, The Australian National University
\emails
\{marcus.hutter, u6642247, sultan.majeed\}@anu.edu.au
}
\begin{document}

\maketitle

\begin{abstract}
    The convergence of many reinforcement learning (RL) algorithms with linear function approximation has been investigated extensively but most proofs assume that these methods converge to a unique solution. In this paper, we provide a complete characterization of non-uniqueness issues for a large class of reinforcement learning algorithms, simultaneously unifying many counter-examples to convergence in a theoretical framework. We achieve this by proving a new condition on features that can determine whether the convergence assumptions are valid or non-uniqueness holds. We consider a general class of RL methods, which we call \textit{natural algorithms}, whose solutions are characterized as the fixed point of a projected Bellman equation (when it exists); notably, bootstrapped temporal difference-based methods such as $TD(\lambda)$ and $GTD(\lambda)$ are natural algorithms. Our main result proves that natural algorithms converge to the correct solution if and only if all the value functions in the approximation space satisfy a certain shape. This implies that natural algorithms are, in general, inherently prone to converge to the wrong solution for most feature choices even if the value function can be represented exactly. Given our results, we show that state aggregation based features are a safe choice for natural algorithms and we also provide a condition for finding convergent algorithms under other feature constructions. 
\end{abstract}

\section{Introduction}
A longstanding goal in reinforcement learning (RL) has been to find algorithms with linear function approximation that reliably converge to the fixed point of the Bellman equations. As such the convergence of different RL methods that display such characteristics have been researched extensively. The $TD(\lambda)$ algorithm converges in an on-policy learning setting to the fixed point of a projected $\lambda$-weighted Bellman equation \cite{TsVR97}. The Residual Gradient algorithm is shown to minimise the Bellman error but suffers from double sampling \cite{Bai95}. More recently,  some temporal difference-based methods have been shown to converge with off-policy learning. The GTD2 and TDC algorithms are shown to converge to the $TD(0)$ solution under an off-policy learning setting \cite{SMP09}. These algorithms have also been extended to their bootstrapped version $GTD(\lambda)$ and shown to converge to the $TD(\lambda)$ solution \cite{Mae11}. However, a core tenet in almost all of these convergence results is the assumption that these RL methods converge to a unique solution; for example in the proof of GTD2's convergence, the matrix quantities $A$ and $C$ are assumed to be non-singular, allowing for uniqueness of solution \cite[Theorem 1]{SMP09}. 

In addition to the convergence results, pertinent counter-examples have been documented in the literature that highlight how the choice of features is crucial to convergence of RL methods \cite{Gor95,No10,Bai95,Ber94,BoM95}. Bertsekas showed that $TD(\lambda)$ with function approximation may converge to a parameter vector which generates a poor estimate of the value function (in terms of Euclidean distance) \cite{Ber94}. Tsitsiklis and Van Roy provided a counter-example showing that RL methods may diverge even when the value function is representable by the chosen features \cite{No10}. More recently, Sutton and Barto present a counter-example where methods that minimise the Bellman error may fail to learn the correct parameter value \cite[Example 11.4]{SuB18}.

In this paper, we provide a complete characterization of non-uniqueness and the potential to converge to the wrong solution for a large class of RL algorithms we call \textit{natural algorithms}. A natural algorithm is any method that can be characterized as solving for the unique fixed point (when it exists) of a projected Bellman equation. We consider all oblique projections and a Bellman equation based on the $TD(\lambda)$ Bellman operator presented in \cite{TsVR97}. Under this definition, the natural algorithms include a large spectrum of algorithms: on one end of the spectrum, the natural algorithms include bootstrapped methods such as $TD(\lambda)$ and $GTD(\lambda)$ since they are characterized by an orthogonal projection, and on the other end the natural algorithms include Bellman-error based methods which are characterized by the identity projection. We consider an RL setting with a continuous state space $\State$ and finite action space $\A$; note that a finite state space is a special case of our setup. Furthermore, we consider the infinite horizon problem for our results. 

% \hfill \newline
Our main contribution is to prove that natural algorithms, even under the setting where the value function can be represented exactly by the features, are inherently prone to non-uniqueness and will converge to the wrong solution for most feature choices. Our main result is as follows:

\newtheorem*{thm_Main}{Theorem \ref{thm_Main}}
\begin{thm_Main}
    Natural algorithms converge if and only if all non-zero linear combination of the features achieve their extreme values on a sub-region of the state space that has non-zero measure under the stationary distribution.
\end{thm_Main}

\noindent
Importantly, given our characterisation, we provide some guidelines for choosing features and algorithms to avoid non-uniqueness. We show that state aggregation based features are a safe choice. We also provide a sufficient condition for algorithms to converge under other feature constructions.

This paper is organized as follows. In Section 2, we introduce some background and notation. In Section 3, we present the theory behind projected equation methods and the characteristic equation to projected Bellman equations. In Section 4, we present a detailed look at the counter-example presented by Sutton and Barto that demonstrates the non-uniqueness issues which plague Bellman-error methods \cite{SuB18}. In Section 5, we present our main results and discuss their implications, including positive feature construction examples. In Section 6, we present our framework for analyzing convergence. Finally, in Section 7, we present the idea behind the proof of Theorem \ref{thm_Main}. For brevity, most proofs and supporting results have been omitted. However, the supporting results and omitted proofs can be found in Appendices \ref{supp_results} and \ref{om_proofs} respectively.

\section{Background and Notation}
We now reiterate some background RL concepts, mathematical concepts and notation used throughout this paper. 

\subsection{RL in Continuous State Space}
We consider an agent-environment setup \cite{SuB18} where an agent follows a stationary policy $\pi$ and interacts with a Markov Decision Process (MDP). 
We assume a continuous state space $\State$ that is compact and measurable and a finite action space $\A$. For simplicity, we will assume that $\State = \R$ in all our examples. The \textit{expected reward function} is a function $R: \State \to \R$ and represents the expected reward to be received for a given state following $\pi$. At a state $s \in \State$, we assume that there is a \textit{transition density function} $T: \State \times \State \to [0, 1]$ whilst following $\pi$. Combined with an initial state $s_0$, the state sequence can be viewed as a time-homogeneous Markov process with transition kernel defined by $T(B|x) = \int_{B} T(y | x) dy ~, \forall B \in \mathcal{B}(\State)~, x \in \State$ where $\mathcal{B}(\State)$ is the Borel sigma-algebra. We consider the infinite horizon problem and thus the value function at state $s \in \State$ is defined as the total discounted expected return: $V(s) = \E \left[ \sum_{t=0}^{\infty} \gamma^{t} R(s_t) \bigg| s_0 = s \right]$, where $\gamma \in [0, 1)$ is the discount factor. By standard MDP theory, the value function satisfies the Bellman equation given by
\begin{align*}
    V(s) = R(s) + \gamma \int_{\State} T(s'|s) V(s) ds'~,
\end{align*}
for any state $s \in \State$. The Bellman operator $\T : \R^{\State} \to \R^{\State}$ is an affine linear operator on $\R^{S}$ and is defined accordingly as 
\begin{align*}
    \left(  \T V \right)(s) = R(s) + \gamma \int_{\State} T(s' | s) V(s') ds' ~.
\end{align*}
If we define $P_T$ to be an operator such that $(P_T f)(s) \coloneqq \int_{\State} T(s'|s) f(s')ds'$, we can express the Bellman operator compactly as $\T V \coloneqq R + \gamma P_T V$ for any $V \in \R^{\State}$. The Bellman equation can then be expressed as the fixed point equation $V = \T V$. 

\noindent
For an agent following a policy $\pi$ and interacting with an MDP, the state sequence can be viewed as a Markov process with transition density function $T$. Throughout this paper we assume that the state Markov process admits a stationary measure $\mu$. Under these assumptions, the value function space inherits extra geometric structure via an inner product defined with respect to $\mu$. For any $f, g \in \R^{\State}$,
\begin{align*}
    \inp{f}{g}_{\mu} \coloneqq \int_{\State} f(s)g(s) \mu(s) ds ~. 
\end{align*}
Showing that $\inp{\cdot}{\cdot}_{\mu}$ is an inner product is routine. We define the norm on the associated inner product space by $\norm{\cdot}_{\mu} = \sqrt{\inp{\cdot}{\cdot}_{\mu}}$. The set of functions in the value function space with finite $\norm{\cdot}_{\mu}$-norm is given by $L^{2}(\State, \mu) \coloneqq \{ V \in \R^{\State} : \norm{V}_{\mu} < \infty \}$. Under our assumptions, it can be shown that the value function associated with the Markov process lives in $L^{2}(\State, \mu)$. Furthermore, our approximations $\hat{V}$ of $V$ also evolve in this space. For any two functions $V, \tilde{V} \in L^2(\State, \mu)$ we say that $V$ is $\mu$-orthogonal to $\tilde{V}$, denoted by $V \perp_{\mu} \tilde{V}$, if and only if $\inp{V}{\tilde{V}}_{\mu} = 0$. 

\subsection{Linear Function Approximation}
When using linear function approximation, the value function $V$ is approximated by a linear combination of the features chosen from a finite-dimensional subspace of $L^{2}(\State, \mu)$. Formally, the approximate value function $\hat{V}$ can be written as a linear combination $\hat{V}(s, w) = \sum_{i=0}^{k} \phi_i(s)w_i , ~ \forall s \in \State$ where $w = (w_1, w_2, \ldots, w_k)^{\top} \in \R^{k}$ is a parameter vector and $\Phi = \left\{ \phi_1, \ldots, \phi_k \right\}$ is the set of features from $\State$ to $\R$ such that span$(\Phi)$ is a finite-dimensional subspace of $L^{2}(\State, \mu)$. To simplify notation, let us define $\phi(s) = ( \phi_1(s), \phi_2(s), \ldots, \phi_k(s))^{\top}$. The approximation $\hat{V}$ can then be represented compactly as an Euclidean inner product $\hat{V}(s, w) = \inp{\phi(s)}{w} = \phi^{\top}(s) w$. 

\noindent
We now define a property of linear combination of features that will be crucial in characterizing non-uniqueness.

\begin{defn}[Flat Extrema]
	Let $\varphi$ be any linear combination of the features $\Phi$, i.e. $\varphi \in \text{span}(\Phi)$, such that $\varphi \not\equiv 0$, $\varphi_{\max} \coloneqq \max_{s \in \State} \varphi(s)$ and $\varphi_{\min} \coloneqq \min_{s \in \State} \varphi(s)$. Let $\mathcal{N}_{\alpha} \coloneqq \left\{ s: \varphi(s) \geq  \alpha \varphi_{\max} \right\}$ for $\alpha \in [0, 1]$. Then we say that $\varphi$ has a \textit{flat maximum} if $\mu[\mathcal{N}_{1}] \coloneqq \int_{\mathcal{N}_{1}} \mu(s) ds > 0$.  Conversely, we say that $\varphi$ has a \textit{non-flat maximum} if $\mu[\mathcal{N}_{1}] = 0$. Similarly, we define $\mathcal{N}_{\alpha}^{-} \coloneqq \left\{ s : \varphi(s) \leq \alpha \varphi_{\min} \right\}$ and say that $\varphi$ has a \textit{flat minimum} if $\mu[ \mathcal{N}_{1}^{-} ] > 0$ and a \textit{non-flat minimum} if $\mu[ \mathcal{N}_{1}^{-} ] = 0$. 
\end{defn}

\noindent
Any linear combination of features that have flat extrema achieve their maximum (and minimum) values in regions of the state space with non-zero measure under the stationary distribution $\mu$. Conversely, linear combinations that have non-flat extrema achieve their maximum (and minimum) values in regions of the state space with non-zero measure, corresponding to sub-regions of the state space that are never visited. Figure \ref{fig:flat_max_min} gives an idea of what functions with a flat or non-flat maximum may look like. 

\begin{figure}[H]
	\begin{minipage}[b]{0.65\linewidth}
		\usetikzlibrary{patterns}
\begin{tikzpicture}
\begin{axis}[
clip=false,
axis x line=bottom, xtick={2}, xticklabel={$\mathcal{S}$},
axis y line=left, ytick={0.21}, yticklabels={$\mathbb{R}$}, ymax=0.21,
no marks,
samples=100,
domain=-2:2,
axis on top,xmin=-2
, width = 1\textwidth, height = .85\textwidth
]
\addplot+[solid, color=blue] {0.2*exp(-x*x)}
[xshift=50pt][yshift=-3pt] node[pos=0.5]{$f$};
\addplot+[solid, color=red] {min(0.2*exp(-x*x),0.1)}
[xshift=50pt][yshift=-7pt] node[pos=0.5]{$g$};
\end{axis}
\end{tikzpicture}
	\end{minipage}
	\begin{minipage}[b]{0.34\linewidth}
		\caption{\label{fig:flat_max_min}A sketch of two functions showing the flat/non-flat extrema property. The function $f$ has a non-flat maximum since it achieves its maximum at a point, whereas $g$ clearly has a flat maximum.}
	\end{minipage}
\end{figure}
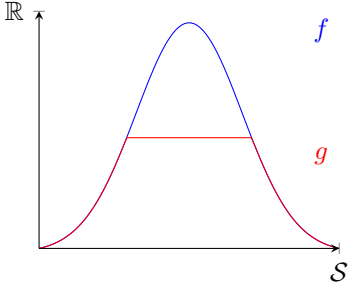

\subsection{The $TD(\lambda)$ Operator}
Tsitsiklis and Van Roy define the $TD(\lambda)$ operator $\T^{(\lambda)}$ in \cite{TsVR97} which we will adapt for our setting and derive a more compact form of the Bellman equation. For $\lambda \in [0, 1)$, the $TD(\lambda)$ operator $\T^{(\lambda)}: L^{2}(\State, \mu) \to L^{2}(\State, \mu)$ is given by
\begin{align*}
    (\T^{(\lambda)}V)(s) \coloneqq &(1-\lambda)\sum_{m=0}^{\infty}\lambda^{m}\\
    &\cdot \E \left[ \sum_{t=0}^{m} \gamma^{t} R(s_t) + \gamma^{m+1}V(s_{t+1}) \bigg| s_0 = s\right]
\end{align*}
where $V \in L^{2}(\State, \mu)$ and $s \in \State$. As Tsitsiklis and Van Roy show, the $TD(\lambda)$ operator can also be expressed as
\begin{align*}
    \T^{(\lambda)}V = (1 - \lambda) \sum_{m=0}^{\infty}\lambda^{m} \left( \sum_{t=0}^{m} (\gamma P_T)^{t}R + (\gamma P_T)^{m+1}V \right)
\end{align*}
\cite[Lemma 3]{TsVR97}. We express this operator in a more compact form as
\begin{align}
    \T^{(\lambda)}V = R^{(\lambda)} + \gamma P^{(\lambda)}_T V ~, \label{gBeq}
\end{align}
where 
\begin{align*}
    &R^{(\lambda)} \coloneqq (1 - \lambda) \sum\limits_{m=0}^{\infty}\lambda^{m} \sum\limits_{t=0}^{m} (\gamma     P_T)^{t}R~,\\
    &P^{(\lambda)}_T \coloneqq (1 - \lambda) \sum\limits_{m=0}^{\infty} (\lambda \gamma)^{m}(P_T)^{m+1}~.
\end{align*}

\noindent
The $P_T^{(\lambda)}$ operator can be seen as being a geometric average over the powers of $P_T$. We define a $\lambda$-weighted discount factor $G$ that corresponds to the discounting performed by the $P_T^{(\lambda)}$ operator:
\begin{align*}
	G \coloneqq \frac{(1-\lambda)\gamma}{1-\lambda\gamma}~.
\end{align*}
Clearly $G$ is bounded in $[0, 1)$. It is also important to note that for $\lambda = 0$ we recover the original discount factor of $\gamma$. 

As we will see later, our class of natural algorithms consists of methods that look to converge to the fixed point of a projected Bellman equation 
\begin{align}
    \hat{V} = \Pi \T^{(\lambda)} \hat{V} \label{pgBeq}
\end{align}
where $\Pi$ is a projection operator. 

\section{Projected Equation Methods}
We now introduce the theory of projected equation methods and present the characteristic equation of projected Bellman equations. We note that Bertsekas similarly covers projected Bellman equation methods \cite{Ber11}. We find it useful to reiterate the concepts here as it pertains to a continuous state space and our setup. 

\subsection{Oblique Projection Operators}
We consider the set of possible projection operators that can be applied to the Bellman equations to find an approximate solution. The projection operators that can project in \textit{any} direction are collectively known as oblique projections.  An oblique projection operator $\Pi: L^{2}(\State, \mu) \to L^{2}(\State, \mu)$ can be characterised as projecting onto $\text{im}(\Pi)$, the image of $\Pi$, and orthogonally to $\text{im}(\Pi^{*})$ where $\Pi^{*}$ is the adjoint operator of $\Pi$. The purpose of looking at projection operators is to find learnable, finite-dimensional representations of the value function. Thus, we will focus on oblique projection operators with finite-dimensional image. For bounded projection operators with finite-dimensional image, the image of the adjoint has the same dimension. 

\begin{prop}[Finite-Dimensional Projections]
	\label{prop_Riesz}
	Let $\Pi: L^{2}(\State, \mu) \to L^{2}(\State, \mu)$ be a bounded linear operator with finite-dimensional image and let $\Pi^{*}$ be its adjoint. Then the image of $\Pi^{*}$ has the same dimension as the image of $\Pi$. 
\end{prop}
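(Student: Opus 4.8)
The plan is to exploit the finite rank of $\Pi$ to write down an explicit formula for $\Pi^{*}$ and read off the dimension of its image directly, rather than going through orthogonal-complement identities. First I would note that since $L^{2}(\State, \mu)$ is a Hilbert space and $\Pi$ is bounded, its adjoint $\Pi^{*}$ exists and is the unique bounded operator satisfying $\inp{\Pi f}{h}_{\mu} = \inp{f}{\Pi^{*} h}_{\mu}$ for all $f, h \in L^{2}(\State, \mu)$; this existence is exactly what the Riesz representation theorem supplies, which is why it appears in the statement.

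Because $\text{im}(\Pi)$ is finite dimensional, say $\dim \text{im}(\Pi) = k$, it admits a $\mu$-orthonormal basis $\{ e_1, \dots, e_k \}$. The central computation is to expand $\Pi f$ in this basis and use the defining property of the adjoint:
\begin{align*}
    \Pi f = \sum_{i=1}^{k} \inp{\Pi f}{e_i}_{\mu}\, e_i = \sum_{i=1}^{k} \inp{f}{\Pi^{*} e_i}_{\mu}\, e_i ~.
\end{align*}
Writing $g_i \coloneqq \Pi^{*} e_i$ and then testing $\inp{\Pi f}{h}_{\mu}$ against an arbitrary $h \in L^{2}(\State, \mu)$ yields the dual representation
\begin{align*}
    \Pi^{*} h = \sum_{i=1}^{k} \inp{h}{e_i}_{\mu}\, g_i ~.
\end{align*}
This immediately gives $\text{im}(\Pi^{*}) \subseteq \text{span}\{ g_1, \dots, g_k \}$, hence $\dim \text{im}(\Pi^{*}) \leq k = \dim \text{im}(\Pi)$.

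To obtain the reverse inequality I would use symmetry. The previous step already shows that $\Pi^{*}$ is itself a bounded operator of finite rank at most $k$, so the identical argument applies verbatim with $\Pi$ replaced by $\Pi^{*}$, giving $\dim \text{im}(\Pi^{**}) \leq \dim \text{im}(\Pi^{*})$. Since $\Pi$ is bounded on a Hilbert space we have $\Pi^{**} = \Pi$, so this reads $\dim \text{im}(\Pi) \leq \dim \text{im}(\Pi^{*})$, and combining the two bounds forces equality. The main point requiring care is justifying the passage that produces the dual formula for $\Pi^{*}$ and confirming that the vectors $g_i = \Pi^{*} e_i$ genuinely span $\text{im}(\Pi^{*})$; once the adjoint is applied to the orthonormal expansion this is transparent, and the symmetry step then closes the argument with no further estimates.
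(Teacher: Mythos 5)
Your proof is correct, and it takes a different route from the paper's. The paper argues via the orthogonal decomposition $L^{2}(\State,\mu) = \text{im}(\Pi) \oplus \text{im}(\Pi)^{\perp}$: any $y \in \text{im}(\Pi)^{\perp}$ lies in $\ker(\Pi^{*})$, so $\text{im}(\Pi^{*}) = \Pi^{*}\bigl(\text{im}(\Pi)\bigr)$ and hence has dimension at most $\dim\text{im}(\Pi)$; the paper then simply asserts equality, leaving implicit the injectivity of $\Pi^{*}$ restricted to $\text{im}(\Pi)$ (which one would check by noting that $\Pi^{*}y = 0$ with $y \in \text{im}(\Pi)$ forces $\inp{\Pi x}{y}_{\mu} = 0$ for all $x$, hence $y \perp \text{im}(\Pi)$ and $y = 0$). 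You instead write an explicit rank-$k$ representation $\Pi^{*}h = \sum_{i}\inp{h}{e_i}_{\mu}\,g_i$ from an orthonormal basis of $\text{im}(\Pi)$, which gives the inequality $\dim\text{im}(\Pi^{*}) \leq \dim\text{im}(\Pi)$, and then obtain the reverse inequality by applying the same argument to $\Pi^{*}$ and invoking $\Pi^{**} = \Pi$. The biduality step is a clean way to close the argument: it sidesteps the injectivity check entirely and, unlike the paper's sketch, explicitly establishes both inequalities. One small remark: for your first inequality you only need $\text{im}(\Pi^{*}) \subseteq \text{span}\{g_1,\dots,g_k\}$, not that the $g_i$ span $\text{im}(\Pi^{*})$ exactly, so the point you flag as ``requiring care'' is in fact not needed once the symmetry step is in place.
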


\noindent
Proposition \ref{prop_Riesz} allows us to characterise the oblique projection operators in terms of two finite-dimensional subspaces. We will generalise slightly beyond bounded projection operators by considering the case where the image of the adjoint is still finite-dimensional but may not be of the same dimension as the original projection operator. As we discuss in the next sub-section, this will allow us to express the solution to projected Bellman equations as a system of linear equations. We define the set of projection operators that we are interested in as follows. 

\begin{defn}[Finite Rank Projection Operators]
	\label{natProjOp}
	Let $\Phi = \{ \phi_1, ..., \phi_k\}$ and $\Psi = \{ \psi_1, ..., \psi_n \}$. Let $\Pi: L^{2}(\State, \mu) \to L^{2}(\State, \mu)$ be an oblique projection operator such that $\text{im}(\Pi) = \text{span}(\Phi)$ and $\text{im}(\Pi^{*}) = \text{span}(\Psi)$. Then $\Pi$ can be characterised by the two sets $(\Phi, \Psi)$.
\end{defn}

\noindent
When conducting our analysis, we always assume the following. 
\begin{ass}
	\label{ass_Gen}
	Let $\Psi = \left\{ \psi_1, \ldots, \psi_n \right\}$ be a basis for $\text{im}(\Pi^{*})$. Also, assume that $\norm{\psi}_{1, \mu} \coloneqq \int_{\State} \psi_i(s)\mu(s) ds = 1$ for all $i$.
\end{ass}

\noindent
Note that Assumption \ref{ass_Gen} results in no loss of generality. Such a basis always exists for finite-dimensional spaces. Furthermore requiring $\norm{\psi_i}_{1, \mu} = 1$ is not restrictive. Since $\psi_i \in \text{im}(\Pi^{*})$ and $\text{im}(\Pi^{*}) \subset L^{2}(\State, \mu)$, we must have that $\norm{\psi_i}_{\mu} < \infty$, implying that $\norm{\psi_i}_{1, \mu}$ is also bounded and can be normalized. In the next sub-section we present the natural algorithms and how the solution to projected Bellman equations is characterised. 

\subsection{Natural Algorithms and the Solution to Projected Bellman Equations}

We now look to determine the approximate value function $\hat{V} = \phi^{\top}w \in \text{span}(\Phi)$ found as the fixed point of a projected Bellman equation. For this task, it is natural to consider a finite rank projection operator $\Pi$ characterised by $(\Phi, \Psi)$ to find $\hat{V}$ as the fixed point of $\hat{V} = \Pi \T^{(\lambda)}\hat{V}$. Since the basis functions are known, the only task left is to find an expression for the parameter vector $w$. The following proposition states that $w$ is the solution to a system of linear equations.

\begin{prop}[Characteristic Equation for Projected Bellman Equations]
	\label{ABb}
	Let $\Pi$ be a finite rank projection operator characterised by $( \Phi, \Psi )$. Suppose a unique solution exists and let $\hat{V} \in \text{span}(\Phi)$, given by $\hat{V} = \phi^{\top} w$, be the unique fixed point of the projected Bellman equation
	\begin{align*}
	\hat{V} = \Pi \T^{(\lambda)} \hat{V}.
	\end{align*}
	Let $A \in \R^{n \times k}$, $B \in \R^{n \times k}$, and $b \in \R^{n}$ be defined by
	\begin{align*}
	&A_{ij} \coloneqq \inp{\psi_i}{\phi_j}_{\mu}~,B_{ij} \coloneqq \inp{\psi_i}{P^{(\lambda)}_T \phi_j}_{\mu} ~, b_i \coloneqq \inp{\psi_i}{R}_{\mu}
	\end{align*}
	respectively for $i = 1, 2, \ldots, n$ and $j = 1, 2, \ldots, k$. Then the parameter vector $w = (w_1, \ldots, w_k)^{\top}$ is given as the solution to the system of linear equations
	\begin{align*}
	\left( A - \gamma B \right)w = b ~. 
	\end{align*}
\end{prop}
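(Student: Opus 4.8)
The plan is to turn the operator fixed-point condition $\hat V = \Pi \T^{(\lambda)}\hat V$ into a finite system of scalar orthogonality (Galerkin) conditions, and then to expand those conditions in coordinates using linearity. The starting observation is the defining property of the oblique projection $\Pi$ characterised by $(\Phi,\Psi)$: for every $f \in L^2(\State,\mu)$ the image $\Pi f$ is the element of $\text{span}(\Phi)$ whose residual $f - \Pi f$ is $\mu$-orthogonal to $\text{im}(\Pi^*) = \text{span}(\Psi)$, equivalently $\ker\Pi = \text{im}(\Pi^*)^{\perp_\mu}$. Since $\hat V = \phi^{\top} w$ lies in $\text{span}(\Phi) = \text{im}(\Pi)$ and $\Pi$ fixes its image, we have $\Pi\hat V = \hat V$; hence the projected Bellman equation rearranges, by linearity, to $\Pi(\T^{(\lambda)}\hat V - \hat V) = 0$. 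By the orthogonality characterisation this is equivalent to $\T^{(\lambda)}\hat V - \hat V \in \ker\Pi$, i.e. to the $n$ scalar conditions
\[
\inp{\psi_i}{\T^{(\lambda)}\hat V - \hat V}_{\mu} = 0, \qquad i = 1, \dots, n.
\]

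Next I would substitute the compact form $\T^{(\lambda)}\hat V = \Rp + \gamma \Pp_T \hat V$ from (\ref{gBeq}) together with the expansion $\hat V = \sum_{j=1}^{k}\phi_j w_j$, and use bilinearity of $\inp{\cdot}{\cdot}_{\mu}$ and linearity of $\Pp_T$ to distribute the inner product over the sum. Each of the $n$ conditions then reads
\[
\sum_{j=1}^{k}\inp{\psi_i}{\phi_j}_{\mu}\,w_j = b_i + \gamma\sum_{j=1}^{k}\inp{\psi_i}{\Pp_T\phi_j}_{\mu}\,w_j,
\]
where the left-hand term assembles the matrix $A$, the $\Pp_T$-term assembles $\gamma B$, and the reward term of $\T^{(\lambda)}$ supplies the entry $b_i$, exactly as defined in the statement. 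Stacking the $n$ equations into matrix form gives $Aw = b + \gamma B w$, that is $(A - \gamma B)w = b$, which is the claim.

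The main obstacle is the first step, namely justifying that $\hat V = \Pi\T^{(\lambda)}\hat V$ collapses to $\mu$-orthogonality of the residual against $\Psi$. This rests on the correspondence $\ker\Pi = \text{im}(\Pi^*)^{\perp_\mu}$ together with the fact that $\Pi$ fixes $\text{span}(\Phi)$ pointwise, so that $\Pi g = 0$ holds if and only if $g \perp_{\mu} \psi_i$ for all $i$; care is needed because the paper admits generalised finite-rank operators with $\dim\text{im}(\Pi^*) = n$ possibly unequal to $k$, for which this orthogonality relation is the operative defining property rather than a consequence of boundedness. The remaining steps are routine bilinear bookkeeping. Finally, no separate solvability argument is required: uniqueness of $\hat V$ is assumed in the hypotheses and corresponds precisely to invertibility of $A - \gamma B$, so the displayed linear system determines $w$ uniquely.
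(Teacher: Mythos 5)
Your proposal is correct and follows essentially the same route as the paper: both reduce the fixed-point equation to the Galerkin orthogonality conditions $\inp{\psi_i}{\T^{(\lambda)}\hat V - \hat V}_{\mu} = 0$ for $i=1,\dots,n$ (the paper via ``the residual $\T^{(\lambda)}\hat V - \Pi\T^{(\lambda)}\hat V$ is $\mu$-orthogonal to $\text{im}(\Pi^*)$'' followed by substituting $\Pi\T^{(\lambda)}\hat V = \hat V$, you via $\Pi(\T^{(\lambda)}\hat V - \hat V)=0$ and $\ker\Pi = \text{im}(\Pi^*)^{\perp_\mu}$, which is the same fact), and then expand by bilinearity to assemble $A$, $B$, and $b$. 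Your added care about the generalised finite-rank case and the remark on solvability go slightly beyond what the paper writes but change nothing substantive.
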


\hfil \newline
Proposition \ref{ABb} suggests that any algorithm that converges to the fixed point of a projected Bellman equation in the limit has its solution characterised by the three matrix-vector quantities $A, B$, and $b$. We denote this class of algorithms as \textit{natural algorithms}.

\begin{defn}[Natural Algorithms]
	Let $\Pi$ be a finite rank projection operator characterised by $(\Phi, \Psi)$. If an algorithm converges to the fixed point $\hat{V} = \phi^{\top}w$ of a projected Bellman equation
	\begin{align*}
	\hat{V} = \Pi \T^{(\lambda)} \hat{V} ~,
	\end{align*}
	then the algorithm is a \textit{natural algorithm}.
\end{defn}
\noindent
Some examples of natural algorithms are the $TD(\lambda)$, $GTD(\lambda)$ and Residual Gradient algorithms. This can be seen since both the $TD(\lambda)$ and $GTD(\lambda)$ algorithms converge to the $TD(\lambda)$ solution and the Residual Gradient algorithm was shown explicitly to solve an obliquely projected Bellman equation \cite{TsVR97,Mae11,Sch10}.

\section{A Counter-Example to Uniqueness}

\begin{figure}[H]
	\centering
    \usetikzlibrary{arrows,automata}
\begin{tikzpicture}[->,>=stealth',shorten >=1pt,auto,node distance=2.8cm,
                    semithick]
  \tikzstyle{every state}=[fill=white,text=black]

%   \node[initial,state] (A)                    {$q_a$};
%   \node[state]         (B) [above right of=A] {$q_b$};
%   \node[state]         (D) [below right of=A] {$q_d$};
%   \node[state]         (C) [below right of=B] {$q_c$};
%   \node[state]         (E) [below of=D]       {$q_e$};

    \node[state]        (A)                     {$1$};
    \node[state]        (B) [right of =A]       {$2$};

    \path (A) edge [bend left=20] node {0} (B)
          (B) edge [bend left=20] node {1} (A)
              edge [loop right] node {-1} (B);
%   \path (A) edge              node {0,1,L} (B)
%             edge              node {1,1,R} (C)
%         (B) edge [loop above] node {1,1,L} (B)
%             edge              node {0,1,L} (C)
%         (C) edge              node {0,1,L} (D)
%             edge [bend left]  node {1,0,R} (E)
%         (D) edge [loop below] node {1,1,R} (D)
%             edge              node {0,1,R} (A)
%         (E) edge [bend left]  node {1,0,R} (A);
\end{tikzpicture}
    \usetikzlibrary{arrows,automata}
\begin{tikzpicture}[->,>=stealth',shorten >=1pt,auto,node distance=2.8cm,
                    semithick]
  \tikzstyle{every state}=[fill=white,text=black]

%   \node[initial,state] (A)                    {$q_a$};
%   \node[state]         (B) [above right of=A] {$q_b$};
%   \node[state]         (D) [below right of=A] {$q_d$};
%   \node[state]         (C) [below right of=B] {$q_c$};
%   \node[state]         (E) [below of=D]       {$q_e$};

    \node[state]        (A)                     {$1$};
    \node[state]        (B) [right of =A]       {$2$};
    \node[state]        (C) [right of =B]       {$3$};

    \path (A) edge [bend left=20] node {0} (B)
              edge [bend right=50] node {0} (C)
          (B) edge [bend left=20] node {1} (A)
          (C) edge                node {-1} (B)
              edge [loop right]   node {-1} (C);
            
%   \path (A) edge              node {0,1,L} (B)
%             edge              node {1,1,R} (C)
%         (B) edge [loop above] node {1,1,L} (B)
%             edge              node {0,1,L} (C)
%         (C) edge              node {0,1,L} (D)
%             edge [bend left]  node {1,0,R} (E)
%         (D) edge [loop below] node {1,1,R} (D)
%             edge              node {0,1,R} (A)
%         (E) edge [bend left]  node {1,0,R} (A);
\end{tikzpicture}
	\caption{A counter-example by Sutton and Barto. In the following the MDPs are referred to as MDP 1 (the two state MDP) and MDP 2 (the three state MDP) respectively.}
	\label{fig:Counterexample}
\end{figure}
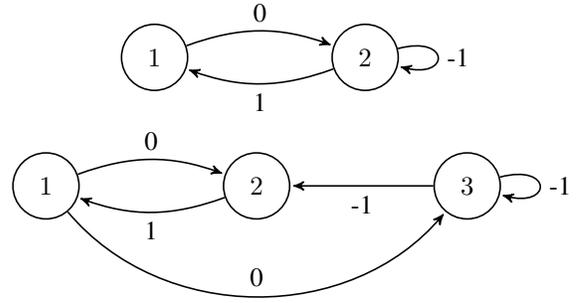

To motivate our results, we first present a counter-example highlighting some of the non-uniqueness issues that natural algorithms can face. Sutton and Barto provide a counter-example that highlights where Bellman-error based methods do not converge and experience non-uniqueness \cite[Example 11.4]{SuB18}. The counter-example considers the two Markov decision processes depicted in Figure \ref{fig:Counterexample}. The edges indicate a state transition and the labels indicate the reward received. When two edges leave a single state, we assume the transition occur with equal probability. The transition matrices of MDP 1 and MDP 2 are then given by
\begin{align*}
    T_1 = \begin{bmatrix} 0 & 1 \\ \frac{1}{2} & \frac{1}{2} \end{bmatrix}~,~ T_2 = \begin{bmatrix} 0 & \frac{1}{2} & \frac{1}{2} \\ 1 & 0 & 0 \\ 0 & \frac{1}{2} & \frac{1}{2} \end{bmatrix}~.
\end{align*}
The stationary distributions are given by $\mu_1 = (\frac{1}{3}, \frac{2}{3})^{\top}$ and $\mu_2 = (\frac{1}{3}, \frac{1}{3}, \frac{1}{3})^{\top}$ for MDP 1 and MDP 2 respectively. A simple linear function approximation mechanism is used with a two component parameter vector $w = (w_1, w_2)^{\top}$. In MDP 1, the value function can be represented exactly by $\hat{V}(1) = w_1,~ \hat{V}(2) = w_2$. In MDP 2, we assume that states 2 and 3 share a parameter value, giving $\hat{V}(1) = w_1,~ \hat{V}(2) = \hat{V}(3) = w_2$. To any RL algorithm using this feature construction, the two MDPs appear indistinguishable as the feature-reward sequence generated under the stationary distribution occur with the same probabilities. Furthermore the Bellman error, given by $E_{BE} \coloneqq \norm{(I - \gamma T) \Phi \hat{w} - R}^{2}_{\mu}$, is not a unique function of the data sample. For a parameter value $\hat{w} = 0$, the Bellman error is 0 in MDP 1 whilst it is $\frac{2}{3}$ in MDP 2. This suggests that even though an algorithm minimizing the Bellman error may converge, it may converge to the \textit{wrong} parameter vector. 

\hfill \newline
\noindent
In light of this example, we explicitly define a stronger notion of convergence to the \textit{correct} solution. The next assumption asserts that there is a true environment and that the value function can be represented.

\begin{ass}
    \label{ass_True}
    Let $R^{*}, T^{*}$ be the true environment and assume that there exists a parameter vector $w^{*}$ such that the value function $V^{*}$ can be represented as $V^{*}(s) = \phi^{\top}(s) w^{*}$.
\end{ass}

\noindent
We now define convergence as follows.
\begin{defn}(Convergence)
    An algorithm is said to converge if it converges to $w^{*}$ or, equivalently, $V^{*}$.
\end{defn}

\section{Main Results}
We now present our main results and discuss their implications. Our main theorem directly characterises convergence in terms of a property on the features.

\begin{theorem}[Flatness Condition on Features]
    \label{thm_Main}
    Natural algorithms converge if and only if all non-zero linear combinations of the features $\Phi$ have flat extrema. 
\end{theorem}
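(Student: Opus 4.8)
The plan is to reduce the statement to a single linear-algebraic condition and then characterise that condition through the extremal geometry of $\text{span}(\Phi)$. First I would observe that Assumption \ref{ass_True} makes the reduction clean: since the true value function $V^{*}$ satisfies $V^{*} = \T^{(\lambda)} V^{*}$ and lies in $\text{span}(\Phi) = \text{im}(\Pi)$, it is fixed by $\Pi$, so $V^{*} = \Pi\T^{(\lambda)}V^{*}$, and hence $w^{*}$ is always \emph{a} solution of the characteristic equation $(A - \gamma B)w = b$ of Proposition \ref{ABb}. Consequently a natural algorithm converges to $w^{*}$ exactly when this solution is the \emph{unique} one, i.e. precisely when $A - \gamma B$ is nonsingular. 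The whole theorem therefore becomes: $A-\gamma B$ is nonsingular (for every admissible environment and projection) if and only if every nonzero $\varphi\in\text{span}(\Phi)$ has flat extrema.

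Next I would rewrite nonsingularity in function-space terms. A nonzero kernel vector $c$ with $(A-\gamma B)c = 0$ is exactly a nonzero $\varphi = \phi^{\top} c \in \text{span}(\Phi)$ for which $\inp{\psi_i}{(I - \gamma P_T^{(\lambda)})\varphi}_{\mu} = 0$ for all $i$; that is, the residual $h \coloneqq (I - \gamma P_T^{(\lambda)})\varphi$ is $\mu$-orthogonal to $\text{im}(\Pi^{*}) = \text{span}(\Psi)$. So I need to understand when such a $\varphi$ can exist, and the quantity to exploit is the behaviour of $P_T^{(\lambda)}$ at the extrema of $\varphi$. The key structural facts are that $P_T^{(\lambda)}$ is a $\mu$-averaging operator built from the stochastic kernel $P_T$: it preserves constants, is order preserving, satisfies $\varphi_{\min} \le (P_T^{(\lambda)}\varphi)(s) \le \varphi_{\max}$ pointwise, and preserves $\mu$-mass because $\mu$ is stationary, giving $\int_{\State} P_T^{(\lambda)}\varphi \, d\mu = \int_{\State}\varphi\, d\mu$. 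From the pointwise bound, on the maximal set $\mathcal{N}_1$ one has $h \ge (1-\gamma)\varphi_{\max}$, and dually $h \le (1-\gamma)\varphi_{\min}$ on $\mathcal{N}_1^{-}$; this is the maximum principle that ties the residual to the extrema.

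For the ``if'' direction I would argue by contraposition: assuming $A - \gamma B$ singular, take the associated $\varphi \ne 0$ and its residual $h$. Flatness of both extrema means $\mu[\mathcal{N}_1]>0$ and $\mu[\mathcal{N}_1^{-}]>0$, so by the maximum-principle bounds $h$ carries a definite, nonvanishing sign on sets of positive $\mu$-measure (strictly positive near the max when $\varphi_{\max}>0$, strictly negative near the min when $\varphi_{\min}<0$, with the constant case handled separately via $\int_{\State} h\,d\mu = (1-\gamma)\int_{\State}\varphi\,d\mu$ and the normalisation $\inp{\psi_i}{1}_{\mu}=1$ of Assumption \ref{ass_Gen}). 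I would then show that definite-sign mass on a positive-measure region is incompatible with $h \perp_{\mu} \text{span}(\Psi)$ for any admissible test space, yielding a contradiction and hence nonsingularity; conversely I expect the flat case to reduce to a genuine aggregated sub-MDP whose characteristic operator is $I$ minus a substochastic contraction, which is automatically invertible.

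For the ``only if'' direction I would construct an explicit failure whenever some nonzero $\varphi$ has a \emph{non-flat} extremum. Because the extremal value is then attained only on a $\mu$-null set, that region is invisible to every $\mu$-weighted statistic, so I can modify the transition kernel there (e.g. by inserting a self-loop, as in the two-/three-state example of Section 4) to produce a second environment that is observationally identical, with the same $A,B,b$, yet has a different representable value function -- equivalently, to engineer a nontrivial kernel vector making $A-\gamma B$ singular. The \textbf{main obstacle} is precisely the bridge in the ``if'' direction: turning the pointwise maximum principle on a positive-measure flat region into genuine non-orthogonality against the whole test space $\Psi$ under only the mild normalisation of Assumption \ref{ass_Gen}. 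The continuous-state measure-theoretic bookkeeping in the ``only if'' construction -- producing an honest stochastic kernel with stationary distribution $\mu$ together with a bona fide alternative $V^{*}$ -- is the secondary difficulty.
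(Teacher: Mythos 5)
Your opening reduction (convergence fails iff $A-\gamma B$ has a nontrivial kernel, i.e.\ iff some nonzero $\varphi=\phi^{\top}c$ has residual $(I-\gamma P_T^{(\lambda)})\varphi$ that is $\mu$-orthogonal to $\mathrm{span}(\Psi)$) matches the paper's Lemma \ref{ambig_1} and Corollary \ref{ambig_2}. After that point both directions of your argument have genuine gaps.

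For the ``if'' direction, the bridge you yourself flag as the main obstacle is in fact a non sequitur: a function that is strictly positive on a set of positive $\mu$-measure can perfectly well be $\mu$-orthogonal to a fixed $n$-dimensional test space $\Psi$, since orthogonality imposes only $n$ scalar constraints and the positive mass against each $\psi_i$ can be cancelled by negative mass elsewhere. The pointwise maximum principle on $\mathcal{N}_1$ therefore does not contradict $h\perp_{\mu}\mathrm{span}(\Psi)$. The paper's route is different: it first shows (Theorem \ref{ambig_5}) that because $T$ is a free variable of the Bellman template, ambiguity is equivalent to the existence of \emph{any} function $f:\State\to[\varphi_{\min},\varphi_{\max}]$ matching the $n$ moments $\frac{1}{G}\int\chi_i\varphi$, which in turn forces the scalar inequalities $G\varphi_{\min}\le\inp{\psi_i}{\varphi}_{\mu}\le G\varphi_{\max}$ for every $i$ (Theorem \ref{thm_ProjMain}); it is the violation of one of these inequalities, not a pointwise sign argument, that rules out ambiguity. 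Your proposal never isolates the fact that the environment may vary freely, which is what converts the problem into a bounded moment problem.

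For the ``only if'' direction, your mechanism does not work as stated: if the extremum is attained only on a $\mu$-null set, modifying the transition kernel there changes the value function only on that null set and hence cannot yield a different parameter vector $w$ (the Sutton--Barto example you invoke relies on aggregated states of \emph{positive} measure). The paper's construction is global: it sets $f=\frac{1}{G}\varphi-\delta+g$, cutting off the pinnacle $\delta$ of $\frac{1}{G}\varphi$ above $G\varphi_{\max}$ and redistributing its mass through $g\in\mathrm{span}(\chi_1,\dots,\chi_n)$ so that all $n$ moment constraints are preserved, and then uses non-flatness quantitatively via Lemma \ref{gamma_1} ($\mu[\mathcal{N}_{G^2}]\to 0$ as $G\to 1$) to show $\norm{g}_{\infty}$ is small enough that $f$ stays inside $[\varphi_{\min},\varphi_{\max}]$. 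Your sketch contains neither the redistribution step nor the $G\to 1$ asymptotics, and without them there is no way to realise the required alternative environment as an honest kernel whose $\tilde{P}_T^{(\lambda)}\varphi$ stays within the admissible range.
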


\noindent
It is important to note that Theorem \ref{thm_Main} holds for all finite rank projections onto $\text{span}(\Phi)$. The factor determining convergence is the choice of features. Theorem \ref{thm_Main} provides a restrictive condition on the possible feature choices available for natural algorithms with linear function approximation to converge. Not only are the usual assumptions of linear independence in the features necessary; it is required that \textit{all} linear combinations of the features have flat extrema. 

\hfill \newline
An immediate consequence of Theorem \ref{thm_Main} is that state aggregation methods are always safe feature construction choices. If states are aggregated such that each subset has non-zero measure under the stationary measure, all value functions in the span of these features have flat extrema. An example of this is a partitioning-based state aggregation scheme, shown in Figure \ref{fig:SA_partition}, that visibly has flat extrema. If there exist aggregated states with measure zero, these states would be unobserved under the stationary measure and would not impact the representation. This result summarised in the following corollary. 

\begin{cor}[State Aggregation]
    State aggregation is a safe feature construction choice for natural algorithms. 
\end{cor}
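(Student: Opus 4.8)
The plan is to reduce the corollary directly to Theorem~\ref{thm_Main} by showing that every non-zero linear combination of state-aggregation features has flat extrema. First I would make the setup precise: a state-aggregation scheme corresponds to a measurable partition $\{S_1, \ldots, S_k\}$ of $\State$, with each feature an indicator $\phi_i = \mathbf{1}_{S_i}$. Before anything else I would dispense with degenerate cells: if some $S_i$ satisfies $\mu[S_i] = 0$, then $\phi_i \equiv 0$ in $L^2(\State, \mu)$ and the cell is never visited under the stationary distribution, so it neither contributes to the representation nor affects the inner-product quantities $A$, $B$, $b$ of Proposition~\ref{ABb}; I would therefore discard such cells and assume without loss of generality that $\mu[S_i] > 0$ for every $i$. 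This is exactly the reduction anticipated in the paragraph preceding the corollary.

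Next I would verify the flatness property. Fix any $\varphi \in \text{span}(\Phi)$ with $\varphi \not\equiv 0$, written $\varphi = \sum_{i=1}^{k} w_i \phi_i$. Since the cells partition $\State$, the function $\varphi$ is piecewise constant, taking the value $w_i$ on $S_i$, and hence $\varphi_{\max} = \max_i w_i$ is attained exactly on the union of those cells $S_i$ on which $w_i$ is maximal. Consequently the super-level set $\mathcal{N}_1 = \{ s : \varphi(s) \geq \varphi_{\max}\}$ contains at least one full cell $S_{i^*}$, so $\mu[\mathcal{N}_1] \geq \mu[S_{i^*}] > 0$ by the reduction above. By the Flat Extrema definition this says $\varphi$ has a flat maximum, and the identical argument applied to $\varphi_{\min} = \min_i w_i$ and $\mathcal{N}_1^{-}$ shows $\varphi$ has a flat minimum.

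Finally, having shown that every non-zero element of $\text{span}(\Phi)$ has flat extrema, I would invoke Theorem~\ref{thm_Main} to conclude that natural algorithms converge under state-aggregation features, which is precisely the claim that state aggregation is a safe feature construction choice. I do not expect a genuine obstacle here: the only point requiring care is the bookkeeping around zero-measure cells, since without the reduction a maximizing cell could in principle have zero measure and the flatness condition would fail. Once that degeneracy is excluded, the conclusion is immediate from the piecewise-constant structure of the features.
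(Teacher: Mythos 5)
Your proposal is correct and follows essentially the same route as the paper, which justifies the corollary in the paragraph preceding it: non-zero linear combinations of partition-indicator features are piecewise constant, hence attain their extrema on whole cells of positive measure (after discarding $\mu$-null cells, which the paper likewise notes do not affect the representation), so Theorem~\ref{thm_Main} applies. Your write-up is in fact somewhat more explicit than the paper's informal argument, but there is no substantive difference.
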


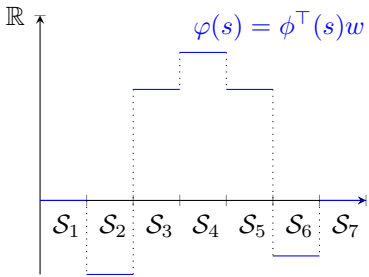
\begin{figure}[H]
    \begin{minipage}[b]{0.69\linewidth}
        \usetikzlibrary{patterns}
\pgfplotsset{compat=1.6}
\pgfplotsset{soldot/.style={color=blue,only marks,mark=*}} \pgfplotsset{holdot/.style={color=blue,fill=white,only marks,mark=*}}

\begin{tikzpicture}
\begin{axis}[
    clip=false,
    axis x line=middle, xtick={2,4,6,8,10,12,14}, xticklabels={$\mathcal{S}_1 ~\quad~$, $\mathcal{S}_2 ~\quad~$, $\mathcal{S}_3 ~\quad~$, $\mathcal{S}_4 ~\quad~$, $\mathcal{S}_5 ~\quad~$, $\mathcal{S}_6 ~\quad~$, $\mathcal{S}_7 ~\quad~$},
    axis y line=left, ytick={5}, yticklabels={$\mathbb{R}$}, ymax=5,
    no marks,
    width = 1\textwidth, height = .85\textwidth
]   
    \addplot[blue, domain=0:2]{0};
    \addplot[blue, domain=2:4]{-2};
    \addplot[blue, domain=4:6]{3};
    \addplot[blue, domain=6:8]{4}
    [xshift=20pt][yshift=10pt] node[pos=1]{$\varphi(s) = \phi^{\top}(s) w$};
    \addplot[blue, domain=8:10]{3};
    \addplot[blue, domain=10:12]{-1.5};
    \addplot[blue, domain=12:14]{0};
    \draw[dotted] (axis cs:2,0) -- (axis cs:2,-2);
    \draw[dotted] (axis cs:4,3) -- (axis cs:4,-2);
    \draw[dotted] (axis cs:6,4) -- (axis cs:6,3);
    \draw[dotted] (axis cs:8,4) -- (axis cs:8,3);
    \draw[dotted] (axis cs:10,3) -- (axis cs:10,-1.5);
    \draw[dotted] (axis cs:12,0) -- (axis cs:12,-1.5);

\end{axis}
\end{tikzpicture} 
    \end{minipage}
    \begin{minipage}[b]{0.3\linewidth}
        \caption{\label{fig:SA_partition}A state aggregation scheme which partitions the state space into non-zero measure subsets. The function $\varphi$ clearly has flat extrema.}
    \end{minipage}
\end{figure}

Though state aggregation is a sufficient choice for natural algorithms to avoid convergence issues, algorithms with other feature constructions have been shown to converge \cite{No10}. We present a condition in the next sub-section to help determine and construct convergent natural algorithms.

\subsection{A Projection Perspective}
Under Assumption \ref{ass_Gen}, the inner product between $\psi_i$ and any function $\varphi \in \text{span}(\Phi)$ can be seen as the projection of $\varphi$ onto $\psi_i$. Our next result presents a condition on these projections which can aid in determining convergent algorithms with feature constructions other than state aggregation.

\begin{theorem}[Convergent Natural Algorithms]
    \label{thm_ProjMain}
    All natural algorithms characterized by $(\Phi, \Psi)$ converge if and only if there exists an $i$ such that for all $\varphi \in \text{span}(\Phi)$
    \begin{align*}
        \inp{\psi_i}{\varphi}_{\mu} \geq G \varphi_{\max} ~\text{ or }~ \inp{\psi_i}{\varphi}_{\mu} \leq G \varphi_{\min}. 
    \end{align*}
\end{theorem}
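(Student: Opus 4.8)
The plan is to recast convergence as a non-singularity statement about the characteristic equation of Proposition~\ref{ABb}, and then to read off the projection condition from the averaging structure of the $TD(\lambda)$ transition operator. First I would note that, under Assumption~\ref{ass_True}, $V^{*}=\phi^{\top}w^{*}\in\text{span}(\Phi)=\text{im}(\Pi)$ and $V^{*}$ is a fixed point of $\T^{(\lambda)}$, so $V^{*}=\Pi\T^{(\lambda)}V^{*}$ and, by Proposition~\ref{ABb}, $w^{*}$ solves $(A-\gamma B)w=b$. Hence a natural algorithm converges precisely when this solution is unique, i.e. when $\ker(A-\gamma B)=\{0\}$. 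Writing $\varphi\coloneqq\phi^{\top}w$, the vector $w$ lies in the kernel exactly when $\inp{\psi_i}{\varphi}_{\mu}=\gamma\inp{\psi_i}{P^{(\lambda)}_T\varphi}_{\mu}$ for every $i$. Since the condition in the statement is environment-independent while $B$ is not, I read ``all natural algorithms characterized by $(\Phi,\Psi)$ converge'' as the requirement that no admissible environment can produce a non-trivial kernel.

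The structural heart of the argument is that $\gamma P^{(\lambda)}_T=G\,\tilde{P}$, where $\tilde{P}\coloneqq(1-\lambda\gamma)\sum_{m\geq0}(\lambda\gamma)^{m}(P_T)^{m+1}$ has coefficients summing to one and is therefore a Markov averaging operator, giving $(\tilde{P}\varphi)(s)\in[\varphi_{\min},\varphi_{\max}]$ at every $s$. The kernel condition thus becomes $\inp{\psi_i}{\varphi}_{\mu}=G\inp{\psi_i}{\tilde{P}\varphi}_{\mu}$, and since $\psi_i\mu$ is a probability measure by Assumption~\ref{ass_Gen}, the right-hand side is confined to $[G\varphi_{\min},G\varphi_{\max}]$. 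I would then show that, by routing transitions from each state toward the super- and sub-level sets of $\varphi$, every value in the open interval $(G\varphi_{\min},G\varphi_{\max})$ is attainable for a suitable admissible $\tilde{P}$, while the endpoints are attainable only when $\varphi$ is extremal on a set of positive $\mu$-measure, i.e. when $\varphi$ has flat extrema.

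For the ``if'' direction, suppose some index $i_0$ satisfies $\inp{\psi_{i_0}}{\varphi}_{\mu}\geq G\varphi_{\max}$ or $\inp{\psi_{i_0}}{\varphi}_{\mu}\leq G\varphi_{\min}$ for every $\varphi$. For any admissible environment the $i_0$-th kernel equation demands $\inp{\psi_{i_0}}{\varphi}_{\mu}=G\inp{\psi_{i_0}}{\tilde{P}\varphi}_{\mu}$, placing the left-hand side outside the open range of the right-hand side; for $\varphi\neq0$ this is impossible, so the kernel is trivial and every such algorithm converges. The only delicate point is the boundary, where $\inp{\psi_{i_0}}{\varphi}_{\mu}$ equals $G\varphi_{\max}$ or $G\varphi_{\min}$; attaining equality forces $\tilde{P}\varphi$ to be extremal $\psi_{i_0}\mu$-almost everywhere, which by the previous paragraph requires $\varphi$ to have flat extrema, and this is exactly the convergent regime identified by Theorem~\ref{thm_Main}, so I would close the boundary case by appealing to that result.

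For the ``only if'' direction I would argue the contrapositive: if no single index separates all $\varphi$, I must produce one admissible environment and a nonzero $\varphi\in\text{span}(\Phi)$ for which $\inp{\psi_i}{\varphi}_{\mu}=G\inp{\psi_i}{\tilde{P}\varphi}_{\mu}$ holds simultaneously for all $i$, forcing $A-\gamma B$ to be singular while keeping $V^{*}$ representable. The hard part will be this construction. The $n$ constraints are coupled through the single function $h=\tilde{P}\varphi$, so individual feasibility of each target $\inp{\psi_i}{\varphi}_{\mu}/G\in(\varphi_{\min},\varphi_{\max})$ does not immediately yield joint feasibility; I would need the target vector to lie in the convex body $\{(\inp{\psi_i}{h}_{\mu})_{i}:\varphi_{\min}\leq h\leq\varphi_{\max}\}$, and the resulting $h$ to be realisable as $\tilde{P}\varphi$ for a genuine kernel $P_T$ having $\mu$ as its stationary measure. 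Reconciling the existential-over-$i$ form of the statement with this joint construction, namely showing that the inability to separate in every direction actually concentrates into a single common bad $\varphi$ together with a genuine environment, is the crux, and I expect to reuse the extremal-routing construction that underlies the proof of Theorem~\ref{thm_Main}.
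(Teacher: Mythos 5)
Your proposal follows essentially the same route as the paper's proof: reduce convergence to triviality of $\ker(A-\gamma B)$ via Lemma \ref{ambig_1} and Corollary \ref{ambig_2}, rewrite $\gamma P^{(\lambda)}_T$ as $G$ times the renormalised Markov averaging operator (your $\tilde{P}$ is exactly the paper's $\tilde{P}^{(\lambda)}_T=\frac{1-\lambda\gamma}{1-\lambda}P^{(\lambda)}_T$ from the proof of Theorem \ref{ambig_5}), use Assumption \ref{ass_Gen} to confine $\gamma\inp{\psi_i}{P^{(\lambda)}_T\varphi}_{\mu}$ to $[G\varphi_{\min},G\varphi_{\max}]$, and take a contrapositive; your completed ``if'' direction is the paper's argument almost verbatim. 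The two points you single out as delicate are precisely the points the paper's proof passes over in silence: it establishes only that ambiguity forces $\inp{\psi_i}{\varphi}_{\mu}\in[G\varphi_{\min},G\varphi_{\max}]$ for all $i$, leans on the realisability claim in the proof of Theorem \ref{ambig_5} (that any $f:\State\to[\varphi_{\min},\varphi_{\max}]$ arises as $\tilde{P}^{(\lambda)}_T\varphi$ for some $T$) without exhibiting a single $f$ meeting all $n$ integral constraints jointly, and then states the contrapositive with the quantifiers in the stronger order ``$\exists i\,\forall\varphi$'' rather than the ``$\forall\varphi\,\exists i$'' that negation actually yields. So your plan is faithful to the source where the source is complete, and the ``crux'' you leave open is a gap inherited from the paper rather than introduced by you; if you want to close it, the construction in the proof of Theorem \ref{thm_Main} (spreading the cut pinnacle $\delta$ over the $\chi_i$ via the Gram matrix $X$) is indeed the intended mechanism, but note it too produces one $f$ per non-flat $\varphi$ and does not by itself repair the quantifier order.
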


\noindent
Theorem \ref{thm_ProjMain} guarantees a natural algorithm's convergence if it can project the extremal regions of any approximate value function. A simple case is when $\inp{\psi_i}{\varphi}_{\mu} = \varphi_{\max}$ or $\inp{\psi_i}{\varphi}_{\mu} = \varphi_{\min}$. This occurs precisely when $\varphi$ has flat extrema and $\psi_i$ projects $\varphi$ on the sub-regions of the state space where $\varphi$ achieves its extremes. An example of this is shown in Figure \ref{fig:proj_max}.

\begin{figure}[H]
    \begin{minipage}[b]{0.69\linewidth}
        \usetikzlibrary{patterns}
\pgfplotsset{compat=1.6}
\pgfplotsset{soldot/.style={color=blue,only marks,mark=*}} \pgfplotsset{holdot/.style={color=blue,fill=white,only marks,mark=*}}

% https://tex.stackexchange.com/questions/362765/plot-function-with-discontinuity

\begin{tikzpicture}
\begin{axis}[
    clip=false,
    axis x line=bottom, xtick={8}, xticklabels={$\State$},
    axis y line=left, ytick={3}, yticklabels={$\mathbb{R}$}, ymax=3,
    no marks,
    width = 1.1\textwidth, height = .85\textwidth,
    samples=100,
    domain=0:8,
    axis on top,
    xmin=0
]   
    \addplot[blue, domain=0:2]{x};
    \addplot[blue, domain=2:6]{2}
    [xshift=-25pt][yshift=10pt] node[pos=1]{$\varphi(s)$};
    \addplot[blue, domain=6:8]{8-x};
    \addplot[red, domain=0:2]{0};
    \addplot[red, domain=2:6]{.75}
    [xshift=-25pt][yshift=10pt] node[pos=1]{$\psi_i(s)$};
    \addplot[red, domain=6:8]{0};
    % [yshift=10pt] node{$\psi_i$};
    % [xshift=50pt][yshift=-3pt] node[pos=0.5]{$f$};
    \draw[dotted] (axis cs:2,.75) -- (axis cs:2,0);
    \draw[dotted] (axis cs:6,.75) -- (axis cs:6,0);
    % \draw[dotted] (axis cs:2,2) -- (axis cs:2,1);

\end{axis}
\end{tikzpicture} 
    \end{minipage}
    \begin{minipage}[b]{0.3\linewidth}
        \caption{\label{fig:proj_max}An example where $\psi_i$ is only non-zero on the sub-region of the state space where $\varphi$ achieves its maximum value. Thus the projection achieves a value of  $\inp{\psi_i}{\varphi}_{\mu} = \varphi_{\max}$.}
    \end{minipage}
\end{figure}
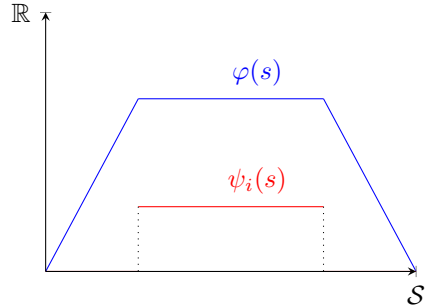

\begin{example}
    We now provide an explicit example of how Theorem \ref{thm_ProjMain} can help construct convergent natural algorithms. Consider the piece-wise linear features displayed in Figure \ref{fig:trapezoid}. Any linear combination of these features also results in piece-wise linear functions that have flat maxima. Then any natural algorithm which has the functions $\psi_i$ positive on the regions of the state space where the features achieve their flat maxima will satisfy Theorem \ref{thm_ProjMain}. Effectively, such an algorithm disregards any information about the regions of the state space that are not in the flat maxima of the features. Such an algorithm can be determined without knowledge of the value function since it only depends on the features, which are chosen apriori. 

    \begin{figure}[H]
        \begin{minipage}[b]{0.69\linewidth}
            \usetikzlibrary{patterns}
\pgfplotsset{compat=1.6}
\pgfplotsset{soldot/.style={color=blue,only marks,mark=*}} \pgfplotsset{holdot/.style={color=blue,fill=white,only marks,mark=*}}

% https://tex.stackexchange.com/questions/362765/plot-function-with-discontinuity

\begin{tikzpicture}
\begin{axis}[
    clip=false,
    axis x line=bottom, xtick={22}, xticklabels={$\State$},
    axis y line=left, ytick={3}, yticklabels={$\mathbb{R}$}, ymax=3,
    no marks,
    width = 1.1\textwidth, height = .85\textwidth,
    samples=100,
    domain=0:14,
    axis on top,
    xmin=0
]   
    
    \addplot[blue, domain=0:2]{0.5*x};
    \addplot[blue, domain=2:6]{1}
    [xshift=10pt][yshift=10pt] node[pos=0]{$\phi_1$};
    \addplot[blue, domain=6:10]{-0.25*x+2.5};
    \addplot[blue, domain=6:10]{0.5*x-3};
    \addplot[blue, domain=10:20]{2}
    [xshift=25pt][yshift=10pt] node[pos=0]{$\phi_2$};
    \addplot[blue, domain=20:22]{-x+22};
    
    % \addplot[purple, dashed, domain=0:2]{0.5*x};
    % \addplot[purple, dashed, domain=2:6]{1};
    % \addplot[purple, dashed, domain=6:10]{-0.25*x+2.5+0.5*x-3}
    % [xshift=10pt][yshift=50pt] node[pos=0]{$\varphi(s)$};
    % \addplot[purple, dashed, domain=10:20]{2};
    % \addplot[purple, dashed, domain=20:22]{-x+22};
    
    \addplot[red, domain=0:2]{0};
    \addplot[red, domain=2:6]{0.3}
    [xshift=10pt][yshift=10pt] node[pos=0]{$\psi_1$};
    \addplot[red, domain=6:10]{0};
    \addplot[red, domain=10:20]{0.3}
    [xshift=25pt][yshift=10pt] node[pos=0]{$\psi_2$};
    \addplot[red, domain=20:22]{0};

    \draw[dotted] (axis cs:2,.3) -- (axis cs:2,0);
    \draw[dotted] (axis cs:6,.3) -- (axis cs:6,0);
    \draw[dotted] (axis cs:10,.3) -- (axis cs:10,0);
    \draw[dotted] (axis cs:20,.3) -- (axis cs:20,0);

    % \addplot[blue, domain=0:2]{x};
    % \addplot[blue, domain=2:6]{2}
    % [xshift=-25pt][yshift=10pt] node[pos=1]{$\varphi(s)$};
    % \addplot[blue, domain=6:8]{8-x};
    % \addplot[red, domain=0:2]{0};
    % \addplot[red, domain=2:6]{.75}
    % [xshift=-25pt][yshift=10pt] node[pos=1]{$\psi_i(s)$};
    % \addplot[red, domain=6:8]{0};
    % \draw[dotted] (axis cs:2,.75) -- (axis cs:2,0);
    % \draw[dotted] (axis cs:6,.75) -- (axis cs:6,0);
    % \draw[dotted] (axis cs:2,2) -- (axis cs:2,1);

\end{axis}
\end{tikzpicture} 
        \end{minipage}
        \begin{minipage}[b]{0.3\linewidth}
            \caption{\label{fig:trapezoid} An example displaying piece-wise linear features ($\phi_1, \phi_2$) and the projection components ($\psi_1, \psi_2$). Natural algorithms of this form are guaranteed to converge.}
        \end{minipage}
    \end{figure}
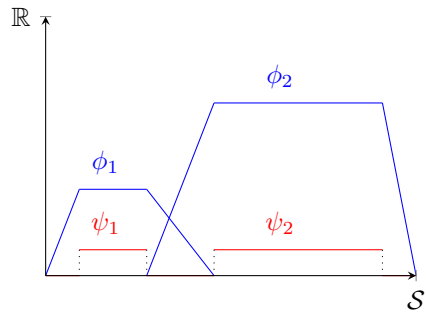
\end{example}

\begin{example}
    An example of a convergent natural algorithm that projects on the states that achieve the maximum value is the modified value iteration approach presented by Tsitsiklis and Van Roy \cite{No10}. At the outset, $K$ representative states $s_1, \ldots, s_K \in \State$ are chosen and their feature vectors $\phi(s_1), \ldots, \phi(s_K)$ are constructed. The remaining states are then chosen from within the convex hull of the feature vectors of the representative states. In this manner, the feature construction ensures that the maximum value of all approximate value functions are centred on the representative states. The modified value iteration then solves for the fixed point of 
    \begin{align*}
        \hat{V} = \Phi \Phi^{\dagger} \T (\hat{V})~,
    \end{align*}
    where $\Phi^{\dagger}$ is the left inverse of $\Phi$. In this construction, the projection operator is given by $\Phi \Phi^{\dagger}$. Since the non-representative states are composed from the representative states, the algorithm proceeds by computing only on the representative states. Thus, this method effectively takes a projection on the points of the state space that achieve the maximum value. 
\end{example}

\section{Framework of Analysis}
In this section we establish the framework we use for analysing convergence and non-uniqueness for natural algorithms. We call our framework the \textit{Bellman template}. We specifically look to capture two phenomena of non-uniqueness that were displayed in Sutton and Barto's counter-example: how natural algorithms may converge to the wrong solution even when the value function is representable and how different MDP environments with different optimal parameter vectors appear indistinguishable under projection. We formally define the Bellman Template as follows.

\begin{defn}[Bellman Template]
	Let $\Pi$ be a finite rank projection characterised by $(\Phi, \Psi)$. Let $w \in \R^{k}$, $R: \State \to \R$ such that $\norm{R}_{\infty} < \infty$, and $T: \State \times \State \to [0, 1]$ such that $\int_{\State} T(s'|s) ds' = 1$. The following constraints on $(w, R, T)$ are collectively defined as the \textit{Bellman template}:
	\begin{itemize}
		\item $(w, R, T)$ satisfy the Bellman equation
		\begin{align}
		    \hat{V} = R + \gamma P^{(\lambda)}_T \hat{V} ~, \label{constraint1}
		\end{align}
		where $\hat{V}(s) = \phi^{\top}(s) w$. 
		\item Let $A \in \R^{n \times k}$, $B \in \R^{n \times k}$ and $b \in \R^{n}$. $R$ and $T$ satisfy
		\begin{align*}
		    \inp{\psi_i}{\phi_j}_{\mu} = A_{ij}~, \inp{\psi_i}{P^{(\lambda)}_T \phi_j}_{\mu} = B_{ij}~, \inp{\psi_i}{R}_{\mu} =b_{ij}
		\end{align*}
		for $i=1, \ldots, n$ and $j = 1, \ldots, k$ respectively and $w$ satisfies
		\begin{align*}
		    (A-\gamma B)w = b~.
		\end{align*}
	\end{itemize}
	We say that a triple $(w, R, T)$ is a solution to the Bellman template if it satisfies these constraints. 
\end{defn}

\noindent
A Bellman template solution represents an MDP environment (through the expected reward function $R$ and the transition density function $T$) and its value function under the stationary policy (through the parameter vector $w$). The first constraint in the Bellman template restricts our attention to the case where the value function is exactly representable by the chosen features $\Phi$. The second constraint provides a condition to capture when different solutions to the Bellman template appear indistinguishable under projection. In particular, we consider when different solutions $(w, R, T)$ produce the same quantities $A, B$, and $b$ that characterize solutions. It may seem strange that non-uniqueness could present an issue since a solution to a projected Bellman equation is uniquely determined by $A, B$ and $b$. The crucial difference however is that we now let the environment variables, $R$ and $T$, vary.

\hfill \newline
\noindent
We define the condition of ambiguity, which represents non-uniqueness, as follows.

\begin{defn}[Ambiguity]
    \label{def_Ambig}
	\textit{Ambiguity} holds if the Bellman template has more than one $(w, R, T)$ solution that have different parameters $w$. 
\end{defn}

\noindent
Under ambiguity, different MDP environments with different optimal value functions appear the same to natural algorithms under projection. Therefore, natural algorithms fail to converge under ambiguity. 

\section{Theorem \ref{thm_Main} Proof Idea}
For brevity, we only present the key ideas behind the proof of Theorem \ref{thm_Main} here. The full proof however can be found in Appendix \ref{om_proofs}. We first present a supporting result that characterises ambiguity.

\begin{theorem}
	\label{ambig_5}
	Let $0 \not\equiv \varphi \in \text{span}(\Phi)$ and $\varphi_{\min} \coloneqq \min\limits_{s \in \State} \varphi(s)$ and $\varphi_{\max} \coloneqq \max\limits_{s \in \State} \varphi(s)$. Ambiguity holds if and only if there exists an $f: \State \to [\varphi_{\min} , \varphi_{\max}]$ such that 	
	\begin{align}
	    \int_{s \in \State} \chi_i(s) \varphi(s) ds = G \int_{s \in \State} \chi_i(s) f(s) ds~. \label{eqn4}
	\end{align}
	for all $i = 1, \ldots, n$, where $\chi_i(s) = \psi_i(s)\mu(s)$. 
\end{theorem}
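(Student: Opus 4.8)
The plan is to prove both directions by passing to the \emph{difference} of the two value functions and identifying $f$ with a normalized one-step look-ahead of that difference; throughout I read the statement as asserting that ambiguity holds if and only if \emph{some} non-zero $\varphi\in\text{span}(\Phi)$ admits such an $f$.

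For the forward direction, suppose ambiguity holds, so the Bellman template admits two solutions $(w,R,T)$ and $(w',R',T')$ with the same $A,B,b$ but $w\neq w'$. I set $\varphi\coloneqq\phi^{\top}(w-w')$, which is non-zero by linear independence of the features. First I would write the two exact Bellman equations from constraint (\ref{constraint1}), namely $\phi^{\top}w=R+\gamma P^{(\lambda)}_T\phi^{\top}w$ and $\phi^{\top}w'=R'+\gamma P^{(\lambda)}_{T'}\phi^{\top}w'$, subtract them, and project against each $\psi_i$ in $\inp{\cdot}{\cdot}_{\mu}$. The reward term $\inp{\psi_i}{R-R'}_{\mu}$ vanishes because both triples reproduce the same $b$, and the transition terms collapse to $\gamma\inp{\psi_i}{P^{(\lambda)}_T\varphi}_{\mu}$ because both reproduce the same $B$, since $\gamma\inp{\psi_i}{P^{(\lambda)}_T\phi^{\top}w}_{\mu}-\gamma\inp{\psi_i}{P^{(\lambda)}_{T'}\phi^{\top}w'}_{\mu}=\gamma\bigl(B(w-w')\bigr)_i=\gamma\inp{\psi_i}{P^{(\lambda)}_T\varphi}_{\mu}$. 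This leaves $\inp{\psi_i}{\varphi}_{\mu}=\gamma\inp{\psi_i}{P^{(\lambda)}_T\varphi}_{\mu}$ for all $i$. I then take $f\coloneqq\tfrac{1}{G}\gamma P^{(\lambda)}_T\varphi$ and check that $f(\State)\subseteq[\varphi_{\min},\varphi_{\max}]$: because $\gamma P^{(\lambda)}_T=\gamma(1-\lambda)\sum_m(\lambda\gamma)^m P_T^{\,m+1}$ is a non-negative combination of iterated Markov averages with total mass $G$, the value $(\gamma P^{(\lambda)}_T\varphi)(s)$ is $G$ times a convex combination of values of $\varphi$, hence lies in $G[\varphi_{\min},\varphi_{\max}]$. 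Substituting $\chi_i=\psi_i\mu$ turns $\inp{\psi_i}{\varphi}_{\mu}=\gamma\inp{\psi_i}{P^{(\lambda)}_T\varphi}_{\mu}$ into exactly $\int_{\State}\chi_i\varphi\,ds=G\int_{\State}\chi_i f\,ds$.

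For the converse, given a non-zero $\varphi=\phi^{\top}u$ and an $f:\State\to[\varphi_{\min},\varphi_{\max}]$ satisfying the displayed equation, I would manufacture two template solutions that differ only in reward and parameter. I fix $T'=T$, pick any base parameter $w$, set $w'=w-u$, and define the rewards through the exact Bellman equation, $R\coloneqq(I-\gamma P^{(\lambda)}_T)\phi^{\top}w$ and $R'\coloneqq(I-\gamma P^{(\lambda)}_T)\phi^{\top}w'$; boundedness of $R,R'$ is immediate from boundedness of $\phi^{\top}w$ on the compact state space. Since $T=T'$, the two triples share $A$ and $B$ automatically, and they share $b$ precisely when $\inp{\psi_i}{R-R'}_{\mu}=\inp{\psi_i}{(I-\gamma P^{(\lambda)}_T)\varphi}_{\mu}=0$, i.e. when $\inp{\psi_i}{\varphi}_{\mu}=\gamma\inp{\psi_i}{P^{(\lambda)}_T\varphi}_{\mu}$ for all $i$. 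Thus the whole converse reduces to constructing a transition density $T$ for which $\tfrac{1}{G}\gamma P^{(\lambda)}_T\varphi$ agrees with the prescribed $f$ after integration against each $\chi_i$; once this holds, $(w,R,T)$ and $(w',R',T)$ are two template solutions with identical $A,B,b$ but $w\neq w'$, so ambiguity holds.

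To realize $f$, I would route the dynamics into two (approximately) invariant extremal regions: choosing densities $\rho_+,\rho_-$ concentrated where $\varphi$ is near $\varphi_{\max}$ and $\varphi_{\min}$, declaring them invariant, and sending a general state $s$ to $\rho_+$ with probability $p(s)$ and to $\rho_-$ otherwise. Then the iterated averages collapse, $(P_T^{\,m+1}\varphi)(s)=p(s)a_++\bigl(1-p(s)\bigr)a_-$ for every $m$, with $a_\pm$ the $\varphi$-average under $\rho_\pm$, so $\tfrac{1}{G}\gamma P^{(\lambda)}_T\varphi(s)=p(s)a_++(1-p(s))a_-$, and solving $p(s)=(f(s)-a_-)/(a_+-a_-)\in[0,1]$ matches $f$ a.e. and hence matches every $\chi_i$-integral. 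I expect the main obstacle to be exactly this step: $T$ must be a genuine transition density, not a mixture of point masses; the geometric series in $P^{(\lambda)}_T$ couples states through the powers $P_T^{\,m+1}$, which the invariant-region device is designed to neutralize; and when $f$ attains the endpoints $\varphi_{\max},\varphi_{\min}$ the extremal regions must carry positive measure. Handling this endpoint case (and, if the features are only $L^2$, replacing pointwise extrema by essential extrema while using that only the $n$ functionals $\chi_i$ need be matched) is where the delicate bookkeeping lies, and it is precisely this endpoint behaviour that later crystallizes into the flatness condition of Theorem \ref{thm_Main}.
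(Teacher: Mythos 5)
Your overall route is the same as the paper's: reduce ambiguity to the existence of a nonzero $\varphi\in\text{span}(\Phi)$ and a transition kernel $T$ with $\inp{\psi_i}{\varphi}_{\mu}=\gamma\inp{\psi_i}{P^{(\lambda)}_T\varphi}_{\mu}$ for all $i$, then identify $f$ with the renormalised look-ahead $\tfrac{1}{G}\gamma P^{(\lambda)}_T\varphi=\tilde{P}^{(\lambda)}_T\varphi$, whose range lies in $[\varphi_{\min},\varphi_{\max}]$ because it is a convex combination of Markov averages. The paper reaches the intermediate identity through Lemma \ref{ambig_1} (ambiguity iff $(A-\gamma B)v=0$ for some $v\neq 0$) and Corollary \ref{ambig_2}; you derive it directly by subtracting the two template Bellman equations and using that both solutions reproduce the same $A,B,b$ --- this is the same computation, just inlined, and your forward direction is complete and correct.

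The substantive difference is in the converse. The paper disposes of it in one sentence: ``any function $f:\State\to[\varphi_{\min},\varphi_{\max}]$ can be represented as $\tilde{P}^{(\lambda)}_T\varphi(s)$ for some $T$,'' with no construction. You actually attempt the construction, via two absorbing regions $\rho_{\pm}$ and a state-dependent mixing probability $p(s)$, which correctly collapses all powers $P_T^{m+1}\varphi$ to $p(s)a_++(1-p(s))a_-$. This is a real contribution over the paper's proof, but the gap you flag is genuine and not merely bookkeeping: since $T(\cdot|s)$ must be a density, $a_+=\int\rho_+\varphi$ is \emph{strictly} less than $\varphi_{\max}$ whenever $\{\varphi=\varphi_{\max}\}$ is a null set, so your construction only realizes $f$ with range in $[a_-,a_+]\subsetneq[\varphi_{\min},\varphi_{\max}]$, and even matching only the $n$ functionals $\int\chi_i f$ is not obviously possible when $f$ sits near the endpoints on a set of positive $\chi_i$-mass. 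Note that this is exactly the case the downstream argument cares about (the $f$ constructed in the proof of Theorem \ref{thm_Main} is capped at $G\varphi_{\max}$, safely inside the interval, precisely to dodge this), so the unproved assertion in the paper hides the same difficulty your construction exposes. In short: same architecture, identical and sound forward direction, and a converse that is more honest but, like the paper's, not fully closed at the endpoints.
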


\hfill \newline
\noindent
The idea behind the proof of Theorem \ref{thm_Main} is to construct a function $f$ that satisfies Theorem \ref{ambig_5}. We show that such a function $f$ exists if and only if there exists a non-zero linear combination of the features $\varphi$ that has non-flat extrema. Then by the equivalence given in Theorem \ref{ambig_5}, ambiguity holds meaning natural algorithms will fail to converge. Taking the contrapositive then gives Theorem \ref{thm_Main}.

To construct a suitable function, first consider the function $\tilde{f} \coloneqq \frac{1}{G} \varphi$. Clearly $\tilde{f}$ satisfies (\ref{eqn4}). For $\varphi_{\max} < 0$, $\tilde{f}$ satisfies the upper bound on the range as $\tilde{f}(s) \leq \varphi_{\max}$ for all $s \in \State$. Similarly, in the case where $\varphi_{\min} > 0$, $\tilde{f}$ satisfies the lower bound on the range. However when $\varphi_{\max} > 0$, $\tilde{f}$ exceeds the upper bound. Again in similar fashion, when $\varphi_{\min} < 0$, $\tilde{f}$ exceeds the lower bound. We now look to construct a function from $\tilde{f}$ that does not exceed the bounds in these cases whilst still satisfying (\ref{eqn4}). We will focus on the $\varphi_{\max} > 0$ case, noting that $\varphi_{\min} < 0$ is treated the same way. We consider capping $\tilde{f}$ at $G \varphi_{\max}$ and spreading the `cut' pinnacle across the basis functions $\chi_i$ in a way that satisfies (\ref{eqn4}). Consider
\begin{align*}
    \bar{f}(s) \coloneqq \frac{1}{G} \varphi(s) - \delta(s)~,
\end{align*}
where $\delta(s) \coloneqq \max \{ 0, \frac{1}{G} \varphi(s) - G \varphi_{\max} \}$ is the cut pinnacle. Now define $f \coloneqq \bar{f} + g$, where $g$ is some function. We look to find $g$ as the pinnacle $\delta$ projected onto the basis functions $\chi_i$ in such a way that (\ref{eqn4}) is satisfied. It can be shown that (\ref{eqn4}) is equivalent to
\begin{align*}
    \int_{\State} \chi_i(s) \delta(s) ds = \int_{\State} \chi_i(s) g(s) ds
\end{align*}
for all $i = 1, \ldots, n$. Thus there are $n$ constraints for $g$ to satisfy. To get a gauge on whether spreading $\delta$ in this fashion may be possible, consider the following. The height of the portion of $\frac{1}{G}\varphi$ that exceeds $\varphi_{\max}$ is of order O$(1-G)$. Also, if $\varphi$ has a non-flat maximum and $G$ is sufficiently close to 1, this portion has `mass' o$(1- G)$ since $\mu\left[ \mathcal{N}_{G} \right]$ goes to 0 as $G$ goes to 1. As $g$ spreads the mass of $\delta$ over $\chi_i$ independently from $G$, it is of order o$(1- G)$. Thus for $G$ sufficiently close to 1, $g(s) < \delta(s)$. Thus it seems plausible that $f = \bar{f} + g = \frac{1}{G} \varphi - \delta + g$ would not surpass the upper bound of $\varphi_{\max}$. We employ an analogous construction to find a function $f^{-}$ that satisfies the lower bound for the case where  $\varphi_{\min} < 0$. Finally, we are able to combine the different functions we construct to satisfy the range conditions for each case in Theorem \ref{ambig_5} and ultimately show our result. Figure \ref{fig:impression} provides a sketch of the construction we employ in this proof. 
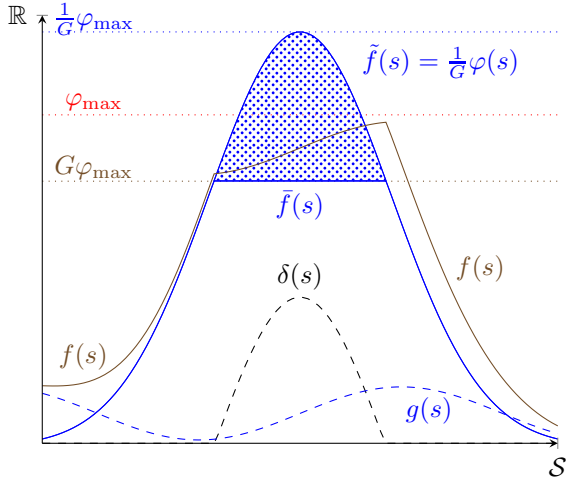
\begin{figure}
    \centering
    \usetikzlibrary{patterns}
\begin{tikzpicture}
\begin{axis}[
clip=false,
axis x line=bottom, xtick={2}, xticklabel={$\mathcal{S}$},
axis y line=left, ytick={1.3}, yticklabels={$\mathbb{R}$}, ymax=1.3,
no marks,
samples=100,
domain=-2:2,
axis on top,xmin=-2]
% \ph_{max}
\addplot+[dotted] {1.25}
[yshift=5pt]node[pos=0.1]{$\frac{1}{G}\varphi_{\max}$};
\addplot+[dotted] {1.0}
[yshift=5pt]node[pos=0.1]{$\varphi_{\max}$};
\addplot+[dotted] {0.8}
[yshift=5pt]node[pos=0.1]{$G\varphi_{\max}$};
% f, \tilde f, \bar f
\addplot+[solid, color=blue] {1.25*exp(-x*x)}
[xshift=40pt] node[pos=0.56]{$\tilde{f}(s)= \frac{1}{G} \varphi(s)$};
\addplot+[solid, color=blue] {min(1.25*exp(-x*x),0.8)}
[yshift=-9pt] node[pos=0.5]{$\bar f(s)$};
\addplot[pattern=crosshatch dots, pattern color=blue,draw=blue, samples=100]
{0.8+sqrt(1.25*exp(-x*x)-0.8)^2} --cycle;
\addplot+[solid] {min(1.25*exp(-x*x),0.8)+0.08*sin(deg(2*x))+0.1}
[yshift=10pt] node[pos=0.07]{$f(s)$}
[xshift=10pt] node[pos=0.8]{$f(s)$};
% \delta, g
\addplot+[dashed] {max(1.25*exp(-x*x)-0.8,0.01)}
[yshift=8pt] node[pos=0.5]{$\delta(s)$};
\addplot+[dashed] {0.08*sin(deg(2*x))+0.1}
[yshift=-8pt] node[pos=0.75]{$g(s)$};
\end{axis}
\end{tikzpicture}
    \caption{An impression of our construction of $f$ in Theorem \ref{thm_Main} that satisfies the upper bound. The function $g$ can be viewed as spreading out $\delta$ over the $\chi_i$ functions such that $f$ does not exceed the upper bound.}
    \label{fig:impression}
\end{figure}

\section{Conclusion}
We have established that natural algorithms are inherently prone to fail without careful consideration of the choice of features. In particular, natural algorithms converge if and only if the features chosen only allow for linear combinations with flat extrema. We also presented a condition from a projection perspective that can help determine the convergence of natural algorithms as well. Given our results, we justify that state aggregation features are sufficient for \textit{all} natural algorithms to converge. We also provide a condition under which natural algorithms with other feature constructions can converge if they project upon the extreme regions of the features. In doing so, we show how a convergent natural algorithm can be constructed from our result as well as arguing for the convergence of the modified value iteration approach presented in \cite{No10}. 

It is important to note that our results begin where the assumptions in previous convergence proofs do not hold. As an example, $TD(\lambda)$ is known to converge on-policy but counter-examples exist for the off-policy case \cite{TsVR97}. In our analysis, the off-policy case is subsumed by the projection operators we consider as we do not restrict them to be non-expansions with respect to the $\norm{\cdot}_{\mu}$-norm. Thus, our result also implies the divergence of Q-learning. 

We note that the natural algorithm class, while extensive, does not cover all known RL algorithms with linear function approximation. In particular the ETD algorithm, introduced in \cite{SMW16}, does not fall within our natural algorithm class.  The ETD algorithm includes an extra interest function $i$ that alters the visiting probabilities of states, meaning we are no longer working with the stationary distribution $\mu$. 

An important factor in determining whether our results will hold in practice is the choice of discount factor. Throughout our analysis, the discount factor plays an important role in defining the extrema regions. As the discount factor moves away from one and towards zero, it becomes less likely that the non-flat extrema property will occur. Thus the discount factor determines the degree to which feature choices that deviate from flat extrema allow natural algorithms to converge. Also, our analysis centres on a strict notion of convergence to the true value function. Investigating whether our analysis can extend to characterise non-uniqueness when considering approximate value functions is an interesting open question. 

\appendix
\clearpage
\section{Supporting Results}
\label{supp_results}
In this section we present the supporting results we use to prove Theorem \ref{thm_Main} and Theorem \ref{thm_ProjMain} as well as any omitted proofs here. 

The first lemma places ambiguity in relation to a non-trivial null-space of the linear system of equations that characterize solutions to projected Bellman equations. A non-trivial null-space is exactly what governs non-uniqueness in a system of linear equations; the only added difficulty is the difference that varying environment variables $R$ and $T$ presents. 

\begin{lemma}
	\label{ambig_1}
	Ambiguity holds if and only if there exists $0 \not\equiv v \in \R^{k}$ such that $\left( A - \gamma B \right)v = 0$.
\end{lemma}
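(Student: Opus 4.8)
The plan is to reduce the statement to a purely linear-algebraic fact about the kernel of $A-\gamma B$, by observing that for any solution of the Bellman template the linear system $(A-\gamma B)w=b$ is \emph{not} an independent constraint but follows from the Bellman equation~(\ref{constraint1}) together with the definitions of $A$, $B$ and $b$. Indeed, (\ref{constraint1}) solved for the reward reads $R=(I-\gamma P^{(\lambda)}_T)\hat{V}$ with $\hat{V}=\phi^{\top}w=\sum_j \phi_j w_j$, and pairing with each $\psi_i$ in the $\mu$-inner product gives, by linearity,
\begin{align*}
    b_i = \inp{\psi_i}{R}_{\mu} &= \inp{\psi_i}{\hat{V}}_{\mu} - \gamma\inp{\psi_i}{P^{(\lambda)}_T\hat{V}}_{\mu} \\
    &= \big((A-\gamma B)w\big)_i .
\end{align*}
Thus every template solution's parameter vector lies in the affine set $\{\, w : (A-\gamma B)w=b \,\}$, and conversely the only freedom in $w$ is the freedom left in this linear system.

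For the forward direction I would take two template solutions $(w_1,R_1,T_1)$ and $(w_2,R_2,T_2)$ with $w_1\neq w_2$; since a template fixes $A$, $B$ and $b$, both satisfy $(A-\gamma B)w_i=b$. Subtracting yields $(A-\gamma B)(w_1-w_2)=0$, so $v:=w_1-w_2\neq 0$ is the desired null vector.

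For the converse I would begin with a null vector $0\neq v\in\ker(A-\gamma B)$ and with the true solution $(w^{*},R^{*},T^{*})$ supplied by Assumption~\ref{ass_True}. The idea is to leave the dynamics untouched, $T':=T^{*}$ (so that $A$ and $B$ are automatically unchanged), put $w':=w^{*}+v\neq w^{*}$, and \emph{define} a matching reward $R':=(I-\gamma P^{(\lambda)}_{T^{*}})(\phi^{\top}w')$, which makes~(\ref{constraint1}) hold by construction. The identity above then gives $\inp{\psi_i}{R'}_{\mu}=\big((A-\gamma B)w'\big)_i=\big((A-\gamma B)w^{*}\big)_i=b_i$, using $(A-\gamma B)v=0$. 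Hence $(w',R',T')$ is a second genuine template solution sharing the same $A$, $B$, $b$ but with a different parameter vector, which is exactly ambiguity.

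The main obstacle is verifying that the constructed triple is an admissible MDP rather than a formal artifact: I must check that $R'$ is a legitimate expected-reward function with $\norm{R'}_{\infty}<\infty$ and that nothing is violated by reusing $T^{*}$. For this I would use that the features, and hence $\phi^{\top}w'$, are bounded on the compact state space $\State$, and that $\gamma P^{(\lambda)}_{T^{*}}$ has supremum-norm at most $G<1$ (because $\norm{P_{T^{*}}f}_{\infty}\leq\norm{f}_{\infty}$ and $(1-\lambda)\sum_{m}(\lambda\gamma)^{m}=\tfrac{1-\lambda}{1-\lambda\gamma}$), so that $I-\gamma P^{(\lambda)}_{T^{*}}$ is bounded and $R'$ inherits boundedness. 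A final point to state carefully is that Assumption~\ref{ass_True} guarantees the system $(A-\gamma B)w=b$ is consistent, so that ``more than one solution'' is equivalent to the affine solution set being larger than a single point, which holds precisely when $\ker(A-\gamma B)\neq\{0\}$.
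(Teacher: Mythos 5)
Your proposal is correct and follows essentially the same route as the paper's proof: the forward direction subtracts the two linear systems to obtain a null vector, and the converse perturbs $w^{*}$ along $v$ while keeping $T^{*}$ fixed and redefining $R$ via $R'=(I-\gamma P^{(\lambda)}_{T^{*}})\phi^{\top}w'$ so that $A$, $B$, $b$ are unchanged (the paper uses $w^{\xi}=w^{*}+\xi v$, which is the same construction). Your additional check that $R'$ is bounded, via compactness of $\State$ and the sup-norm non-expansiveness of $P^{(\lambda)}_{T^{*}}$, is a detail the paper omits but that the Bellman template's requirement $\norm{R}_{\infty}<\infty$ does call for.
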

\begin{proof}
    Under ambiguity, there exists another solution to the Bellman template $(w^{0}, R^{0}, T^{0})$ such that $w^{0} \neq w^{*}$. From the derivation of Proposition \ref{ABb}, we can show that both $w = w^{0}$ and $w = w^{*}$ satisfy
    \begin{align*}
    (A - \gamma B) w = b ~.
    \end{align*}
    Then trivially, $(A - \gamma B)v = 0$ holds since we can take $v = w^{*} - w^{0}$. Now consider the reverse. Assume that $(A - \gamma B)v = 0$ holds for some $v \neq 0$. For $\xi > 0$, let us define
    \begin{align*}
    &w^{\xi} = w^{*} + \xi v,~ T^{\xi} = T^{*},~ R^{\xi} = \phi^{\top} w^{\xi} - \gamma P^{(\lambda)}_{T^{\xi}} \phi^{\top} w^{\xi} ~.
    \end{align*}
    Then $(w^{\xi}, R^{\xi}, T^{\xi})$ satisfies the generalised Bellman equation since
    \begin{align*}
    \T^{(\lambda)} \phi^{\top}(s) w &= R^{\xi}(s) + \gamma P^{(\lambda)}_{T^{\xi}} \phi^{\top}(s) w^{\xi}\\
    &= \phi^{\top}(s) w^{\xi} ~,
    \end{align*}
    which is precisely the left-hand side of the generalised Bellman equation. Let $A^{\xi} \in \R^{n \times k}$, $B^{\xi} \in \R^{n \times k}$ and $b \in \R^{n}$ be given by
    \begin{align*}
    A^{\xi}_{ij} = \inp{\psi_i}{\phi_j}_{\mu},~ B^{\xi}_{ij} = \inp{\psi_i}{P^{(\lambda)}_{T^{\xi}} \phi_j}_{\mu},~ b^{\xi}_i = \inp{\psi_i}{R}_{\mu}~,
    \end{align*}
    for  $i=1, \ldots, n$ and $j = 1, \ldots, k$. We trivially have that $A^{\xi} = A$ and we note that $B^{\xi} = B$ since $T^{\xi} = T^{*}$. By Proposition \ref{ABb}, applying the projection $\Pi$ characterised by $(\Phi, \Psi)$ onto the generalised Bellman equation induces the following
    \begin{align*}
    \left( A^{\xi} - \gamma B^{\xi} \right) w^{\xi} = b^{\xi} ~.
    \end{align*}
    We then have the following derivation
    \begin{align*}
    b^{\xi} &= \left( A^{\xi} - \gamma B^{\xi} \right) w^{\xi}\\
    &= \left( A^{\xi} - \gamma B^{\xi} \right) \left( w^{*} + \xi v \right)\\
    &\overset{(a)}{=} \left( A^{\xi} - \gamma B^{\xi} \right) w^{*}\\
    &= b ~. 
    \end{align*}
    where (a) follows since $(A - \gamma B)v = 0$. Thus we have that $b^{\xi} = b$ as well. Thus, for any $\xi > 0$, $(w^{\xi}, R^{\xi}, T^{\xi})$ satisfies the generalised Bellman equation as well and so ambiguity holds.
\end{proof}

\hfil \newline
The next corollary is effectively a restatement of Lemma \ref{ambig_1} that will be easier to work with later. 

\begin{cor}
	\label{ambig_2}
	For all $i = 1, \ldots, n$, let $\chi_i(s) \coloneqq \psi_i(s) \mu(s)$ where $\mu$ is the stationary distribution. Then ambiguity holds if and only if there exists $T$ and $0 \not\equiv \varphi \in \text{span}(\Phi)$
	such that for all $i = 1, \ldots, n$,
	\begin{align*}
	\int_{\State} \chi_i(s) \big( I - \gamma P^{(\lambda)}_T \big) \varphi(s) ds = 0 ~.
	\end{align*} 
\end{cor}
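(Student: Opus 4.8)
The plan is to prove Corollary~\ref{ambig_2} as a direct algebraic rewriting of the null-space characterisation in Lemma~\ref{ambig_1}. That lemma already tells us that ambiguity holds if and only if the matrix $A - \gamma B$ (built from the features and the transition $T$ of the associated Bellman-template solution) has a non-trivial kernel, i.e. there is some $0 \not\equiv v \in \R^{k}$ with $(A - \gamma B)v = 0$. The only work remaining is to translate this finite-dimensional statement about the coefficient vector $v$ into the stated integral statement about the function $\varphi = \phi^{\top} v \in \text{span}(\Phi)$, so rather than re-derive ambiguity from scratch I would simply chain the two equivalences.

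First I would unfold the entries of $A - \gamma B$. By definition $A_{ij} = \inp{\psi_i}{\phi_j}_{\mu}$ and $B_{ij} = \inp{\psi_i}{P^{(\lambda)}_T \phi_j}_{\mu}$, so by linearity of the operator $P^{(\lambda)}_T$ and of the inner product,
\begin{align*}
    (A - \gamma B)_{ij} = \inp{\psi_i}{\big(I - \gamma P^{(\lambda)}_T\big)\phi_j}_{\mu} .
\end{align*}
Taking the $i$-th component of $(A - \gamma B)v$ and again pulling the finite sum $\sum_j v_j$ inside the linear operator and the inner product gives
\begin{align*}
    \big[(A - \gamma B)v\big]_i = \inp{\psi_i}{\big(I - \gamma P^{(\lambda)}_T\big)\varphi}_{\mu}, \quad \varphi \coloneqq \phi^{\top} v = \sum_{j=1}^{k} v_j \phi_j .
\end{align*}
Finally I would expand the $\mu$-inner product as an integral and absorb the stationary density into $\chi_i = \psi_i \mu$, obtaining $\big[(A - \gamma B)v\big]_i = \int_{\State} \chi_i(s)\,\big(I - \gamma P^{(\lambda)}_T\big)\varphi(s)\,ds$. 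Requiring this to vanish for every $i = 1, \ldots, n$ is then literally the kernel condition $(A - \gamma B)v = 0$, so the two conditions coincide termwise.

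It remains to reconcile the quantifiers, which is where the only real care is needed. The correspondence $v \mapsto \varphi = \phi^{\top} v$ is a linear bijection between $\R^{k}$ and $\text{span}(\Phi)$ because the features $\Phi$ are linearly independent; hence $v \neq 0 \Leftrightarrow \varphi \not\equiv 0$, matching the ``$0 \not\equiv \varphi \in \text{span}(\Phi)$'' in the statement. The existential ``there exists $T$'' is precisely the transition appearing in $B = B(T)$ of whichever Bellman-template solution realises the kernel in Lemma~\ref{ambig_1}: in the forward direction I take $T$ to be that solution's transition, while in the reverse direction the given $T$ and $\varphi$ supply a non-trivial $v$ with $(A - \gamma B(T))v = 0$, to which I apply the reverse construction of Lemma~\ref{ambig_1} to exhibit multiple template solutions with differing $w$. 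The main obstacle is therefore purely bookkeeping: ensuring the $T$ used to define $B$ in the lemma is exactly the $T$ carried inside $P^{(\lambda)}_T$ in the integral, and that interchanging the finite sum with the linear operator $I - \gamma P^{(\lambda)}_T$ and with the integral is justified --- which it is, since everything is a finite linear combination of functions in $L^{2}(\State, \mu)$. Once these are aligned, the equivalence is immediate.
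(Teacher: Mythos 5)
Your proposal is correct and follows essentially the same route as the paper: both chain Lemma \ref{ambig_1} with the algebraic identity $[(A-\gamma B)v]_i = \int_{\State}\chi_i(s)(I-\gamma P^{(\lambda)}_T)\varphi(s)\,ds$ obtained by linearity and Fubini--Tonelli, identifying $\varphi = \phi^{\top}v$. Your explicit handling of the quantifiers (the bijection $v \leftrightarrow \varphi$ via linear independence of $\Phi$, and the role of $T$) is in fact slightly more careful than the paper's terse "since we've only looked at equivalences."
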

\begin{proof}
	From Theorem \ref{ambig_1}, we have that ambiguity holds if and only if there exists $0 \not\equiv v \in \R^{k}$ such that
	\begin{align*}
    	\left( A - \gamma B \right)v = 0 ~.
    	\end{align*}
    	Now for all $i=1, \ldots, n$ and $j = 1, \ldots, k$
    	\begin{align*}
    	A_{ij} - \gamma B_{ij} &= \int_{\State} \psi_i(s) \mu(s) \phi_j(s) - \gamma \psi_i(s) \mu(s) P^{(\lambda)}_T \phi_j(s) ds\\
    	&= \int_{\State} \chi_i(s) \left( I - \gamma P^{(\lambda)}_T \right) \phi_j(s) ds
	\end{align*}
	for some $T$. Now note that $A_{i \cdot}$ and $B_{i \cdot}$ are row vectors of $A$ and $B$ for all $i = 1, \ldots, n$. Then since $\left( A - \gamma B \right)v = 0$, we have for all $i = 1, \ldots, n$ the following derivation
	\begin{align*}
	0 &= \left( A_{i \cdot} - \gamma B_{i \cdot} \right) v\\
	&= \sum_{j = 0}^{k} \left( A_{ij} - \gamma B_{ij} \right) v_j\\
	&\overset{(a)}{=} \sum_{j = 0}^{k} \int_{\State} \chi_i(s) \left( I - \gamma P^{(\lambda)}_T \right) \phi_j(s) v_j ds\\
	&\overset{(b)}{=} \int_{\State} \chi_i(s) \left( I - \gamma P^{(\lambda)}_T \right) \sum_{j = 0}^{k} \phi_j(s) v_j ds\\
	&= \int_{\State} \chi_i(s) \left( I - \gamma P^{(\lambda)}_T \right) \varphi(s) ds ~.
	\end{align*}
	Here in (a) we substituted in the derivation of $A_{ij} - \gamma B_{ij}$ from above and in (b) we used the Fubini-Tonelli theorem to swap the sum and the integral. Since we've only looked at equivalences, our if and only if result holds. 	
\end{proof}

\hfil \newline
The next lemma presents a useful characterisation of non-flat extrema as the $\lambda$-weighted discount factor approaches the limit. 

\begin{lemma}
	\label{gamma_1}
	Let $0 \not\equiv \varphi \in \text{span}(\Phi)$. Then $\varphi$ has non-flat maximum if and only if $\mu[\mathcal{N}_{G}] \to 0$ as $G \to 1$. Similarly, $\varphi$ has non-flat minimum if and only if $\mu[\mathcal{N}_{G}^{-}] \to 0$ as $G \to 1$. 
\end{lemma}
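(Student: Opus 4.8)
The plan is to reduce the statement to a continuity-of-measure argument, exploiting that the stationary distribution $\mu$ is a finite (probability) measure and that the super-level sets $\mathcal{N}_G$ form a monotone family in $G$. First I would observe that $\mathcal{N}_1 = \{s : \varphi(s) \geq \varphi_{\max}\} = \{s : \varphi(s) = \varphi_{\max}\}$ is exactly the set on which $\varphi$ attains its maximum, so that having a non-flat maximum is by definition the statement $\mu[\mathcal{N}_1] = 0$. The whole first equivalence then amounts to showing $\lim_{G \to 1} \mu[\mathcal{N}_G] = \mu[\mathcal{N}_1]$, after which both directions of the iff are immediate.

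To establish this limit I would treat $G$ as a free parameter in $[0,1)$ (it can be driven to $1$ by, e.g., sending $\gamma \to 1$ with $\lambda$ fixed, since $G = (1-\lambda)\gamma/(1-\lambda\gamma) \to 1$). Focusing on the substantive case $\varphi_{\max} > 0$, which is the regime actually invoked in the proof of Theorem \ref{thm_Main}, the threshold $G\varphi_{\max}$ increases towards $\varphi_{\max}$ as $G \uparrow 1$, so the super-level sets are nested decreasing: for $G_1 < G_2$ one has $G_1\varphi_{\max} < G_2\varphi_{\max}$, hence $\mathcal{N}_{G_2} \subseteq \mathcal{N}_{G_1}$. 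I would then verify the key set identity $\bigcap_{G<1} \mathcal{N}_G = \mathcal{N}_1$: a point $s$ lies in every $\mathcal{N}_G$ iff $\varphi(s) \geq G\varphi_{\max}$ for all $G<1$, i.e. iff $\varphi(s) \geq \sup_{G<1} G\varphi_{\max} = \varphi_{\max}$, which is precisely membership in $\mathcal{N}_1$.

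With monotonicity and this identity in hand, continuity from above of the finite measure $\mu$ (applied along any sequence $G_n \uparrow 1$, using $\mu[\mathcal{N}_{G_1}] \leq 1 < \infty$) yields $\lim_{G\to1}\mu[\mathcal{N}_G] = \mu[\mathcal{N}_1]$. Consequently $\mu[\mathcal{N}_G] \to 0$ iff $\mu[\mathcal{N}_1] = 0$ iff $\varphi$ has a non-flat maximum, proving the first equivalence. The minimum statement follows by the symmetric argument applied to $\mathcal{N}_G^- = \{s : \varphi(s) \leq G\varphi_{\min}\}$ in the case $\varphi_{\min} < 0$: there $G\varphi_{\min}$ decreases to $\varphi_{\min}$, the sets $\mathcal{N}_G^-$ are again nested decreasing with $\bigcap_{G<1}\mathcal{N}_G^- = \mathcal{N}_1^-$, and continuity from above gives $\mu[\mathcal{N}_G^-]\to\mu[\mathcal{N}_1^-]$.

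I expect the main obstacle to be bookkeeping the sign of $\varphi_{\max}$ (respectively $\varphi_{\min}$), since it controls the direction in which the threshold $G\varphi_{\max}$ moves and hence whether the family $\mathcal{N}_G$ is genuinely shrinking down to the argmax set; the clean nesting-and-intersection picture is exactly the one relevant for a nonzero $\varphi$ with $\varphi_{\max}>0$, which is the case used downstream. The remaining ingredients, namely attainment of the extrema (guaranteed by the $\max/\min$ in the definition over the compact state space) and finiteness of $\mu$, are routine.
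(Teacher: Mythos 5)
Your proof is correct and follows the same basic strategy as the paper's: reduce both directions of the equivalence to the single claim $\lim_{G\to 1}\mu[\mathcal{N}_G]=\mu[\mathcal{N}_1]$. The difference is one of rigor rather than route: where the paper asserts this limit (the reverse direction is dismissed as ``trivially $\lim_{G\to1}\mu[\mathcal{N}_G]=\mu[\mathcal{N}_1]$'' and the forward direction is a one-line rearrangement of $\mu[\mathcal{S}]=\mu[\mathcal{S}\setminus\mathcal{N}_G]+\mu[\mathcal{N}_G]$), you actually justify it via the nesting $\mathcal{N}_{G_2}\subseteq\mathcal{N}_{G_1}$ for $G_1<G_2$, the identity $\bigcap_{G<1}\mathcal{N}_G=\mathcal{N}_1$, and continuity from above of the finite measure $\mu$, which is exactly the missing ingredient. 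Your sign caveat is also a genuine catch that the paper overlooks: for $\varphi_{\max}<0$ one has $G\varphi_{\max}>\varphi_{\max}$, so $\mathcal{N}_G=\emptyset$ for every $G<1$ and $\mu[\mathcal{N}_G]\to 0$ regardless of whether $\mu[\mathcal{N}_1]>0$, so the stated equivalence literally fails there; this is harmless because the proof of Theorem \ref{thm_Main} only invokes the lemma when $\varphi_{\max}>0$ (respectively $\varphi_{\min}<0$), precisely the regime your argument covers.
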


\begin{proof}
	We will only prove that $\varphi$ has non-flat maximum if and only if $\mu[\mathcal{N}_{G}] \to 0$ as $G \to 1$ noting that adapting the proof for the non-flat minimum case is routine. \newline
	
	\noindent
	We first show the forward direction. Since $\varphi$ has a non-flat maximum, we have that $\mu[\mathcal{N}_{1}] = 0$. Also $\mu[\State]$ can be written as
	\begin{align*}
	    \mu[\State] = \mu[\State \backslash \mathcal{N}_{G} + \mathcal{N}_{G}] = \mu[\State \backslash \mathcal{N}_{G} ] + \mu[\mathcal{N}_{G}] ~. 
	\end{align*}
	\noindent
	Re-arranging and taking the limit as $G$ goes to 1 gives
	\begin{align*}
	\lim_{G \to 1} \mu[\mathcal{N}_G] = \lim_{G \to 1} (\mu[\State] - \mu[S\backslash \mathcal{N}_G]) = 0~.
	\end{align*}
	Thus the forward direction holds. Now consider the reverse direction. Suppose $\mu[\mathcal{N}_G] \to 0$ as $G \to 1$. Then trivially $\lim_{G \to 1} \mu[\mathcal{N}_G] = \mu[\mathcal{N}_{1}] = 0$. Thus the result holds. 
\end{proof}

\noindent
The next result shows that $P^{(\lambda)}_T$ is a non-expansion.
\begin{prop}
	\label{nexp}
	The operator $P^{(\lambda)}_T$ is a non-expansion with respect to $\norm{\cdot}_{\mu}$. 
\end{prop}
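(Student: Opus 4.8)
The plan is to first establish the non-expansion property for the one-step operator $P_T$ and then lift it to $P^{(\lambda)}_T$ by exploiting that the latter is a non-negatively weighted sum of powers of $P_T$ whose weights sum to at most one. The whole argument is the standard Jensen-plus-stationarity computation, with the triangle inequality handling the $\lambda$-averaging.

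First I would show that $\norm{P_T f}_{\mu} \leq \norm{f}_{\mu}$ for every $f \in L^2(\State, \mu)$. Expanding the squared norm and using that $T(\cdot\,|\,s)$ is a probability density, i.e. $\int_{\State} T(s'|s)\,ds' = 1$, Jensen's inequality (equivalently the conditional Cauchy--Schwarz inequality) gives the pointwise bound $(P_T f)(s)^2 = \left( \int_{\State} T(s'|s) f(s')\,ds' \right)^2 \leq \int_{\State} T(s'|s) f(s')^2\,ds'$. Integrating this against $\mu$ and interchanging the order of integration by Fubini--Tonelli, which is legitimate since the integrand is non-negative, yields $\norm{P_T f}_{\mu}^2 \leq \int_{\State} f(s')^2 \left( \int_{\State} T(s'|s)\mu(s)\,ds \right) ds'$. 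At this point I would invoke the stationarity of $\mu$: its defining invariance property $\int_{\State} T(s'|s)\mu(s)\,ds = \mu(s')$ collapses the inner integral, so the right-hand side is exactly $\norm{f}_{\mu}^2$. This proves $\norm{P_T f}_{\mu} \leq \norm{f}_{\mu}$, and iterating gives $\norm{(P_T)^m f}_{\mu} \leq \norm{f}_{\mu}$ for every $m$.

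Finally I would apply the triangle inequality to the series $P^{(\lambda)}_T = (1-\lambda) \sum_{m=0}^{\infty} (\lambda\gamma)^m (P_T)^{m+1}$. Since $\lambda\gamma < 1$ and each term is bounded in $\norm{\cdot}_{\mu}$ by $\norm{f}_{\mu}$, the series converges absolutely, so I may bound $\norm{P^{(\lambda)}_T f}_{\mu} \leq (1-\lambda) \sum_{m=0}^{\infty} (\lambda\gamma)^m \norm{(P_T)^{m+1} f}_{\mu} \leq \frac{1-\lambda}{1-\lambda\gamma}\, \norm{f}_{\mu}$. Observing that $\frac{1-\lambda}{1-\lambda\gamma} \leq 1$, which holds precisely because $\gamma < 1$, completes the proof that $P^{(\lambda)}_T$ is a non-expansion.

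The only genuinely non-routine ingredient is the correct use of the stationarity condition $\int_{\State} T(s'|s)\mu(s)\,ds = \mu(s')$, which is what turns the Jensen bound into an exact cancellation against $\norm{f}_{\mu}^2$; without stationarity one would merely control $\norm{P_T f}$ by the norm under the pushed-forward measure. Everything else, namely the Jensen step, the Fubini swap, and the geometric summation, is standard, so I expect the stationarity step to be the conceptual crux while the remaining manipulations are mechanical.
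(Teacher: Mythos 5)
Your proposal is correct and follows essentially the same route as the paper's proof: Jensen's inequality, a Fubini--Tonelli swap, and the stationarity of $\mu$ to show $P_T$ is a non-expansion, then the triangle inequality and the geometric sum $\frac{1-\lambda}{1-\lambda\gamma}\leq 1$ to lift the bound to $P^{(\lambda)}_T$. Your term-by-term application of the triangle inequality to the series is in fact slightly cleaner than the paper's corresponding step, but the argument is the same.
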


\begin{proof}
	We first show that $P_T$ is a non-expansion before showing the result. To see that $P_T$ is a non-expansion note that
	\begin{align*}
    	\norm{P_T V}^{2}_{\mu} &= \inp{P_T V}{P_T V}_{\mu}\\
    	&= \int_{\State} \mu(s) \left( \int_{\State'} T(s'|s)V(s')ds' \right)^{2} ds\\
    	&\overset{(a)}{\leq} \int_{\State} \mu(s) \int_{\State'}T(s'|s)V(s')^{2}ds'ds\\
    	&\overset{(b)}{=} \int_{\State'} \int_{\State} \mu(s) T(s' | s) V(s')^{2}ds ds'\\
    	&\overset{(c)}{=} \int_{\State'} \mu(s') V(s')^{2}ds'\\
    	&= \norm{V}^{2}_{\mu}~.
	\end{align*}
	Here (a) follows by Jensen's inequality, (b) follows by the Tonelli-Fubini theorem, and (d) follows since $\mu$ is the stationary distribution. Since the quadratic function is monotonically increasing on $\R_{+}$, we have that $\norm{P_T V}_{\mu} \leq \norm{V}_{\mu}$ and so $P_T$ is non-expansive. 
	
	\noindent
	Now to see that $P^{(\lambda)}_T$ is non-expansive, we have the following derivation:
	\begin{align*}
	\norm{P^{(\lambda)}_T V}_{\mu} &= \norm{(1 - \lambda) \sum_{m=0}^{\infty} (\gamma \lambda)^{m} P_T^{m+1} V}_{\mu}\\
	&\overset{(a)}{\leq} \norm{(1-\lambda) \sum_{m=0}^{\infty} (\gamma \lambda)^{m}V}_{\mu}\\
	&= \frac{1 - \lambda}{1 - \gamma \lambda} \norm{V}_{\mu}\\
	&\overset{(b)}{\leq} \norm{V}_{\mu}~.
	\end{align*}
	Here (a) follows as $P_T$ is a non-expansion and (b) follows since $\frac{1 - \lambda}{1 - \gamma \lambda} \leq 1$ as $\lambda \geq \gamma \lambda$. Thus $P^{(\lambda)}_T$ is a non-expansion. 
\end{proof}

\section{Omitted Proofs}
\label{om_proofs}
\subsection*{Proposition \ref{prop_Riesz} Proof}
\begin{proof}
    Let $l_y(x) = \inp{\Pi x}{y}_{\mu}$. Then by the Riesz representation theorem, there exists $z_y$ such that $l_y(x) = \inp{x}{z_y}_{\mu}$ and by definition, $\Pi^{*}y = z_y$. Now note that if $y \in \text{im}(\Pi)^{\perp}$, then $y \in \text{ker}(\Pi^{*})$ and so $z_y = 0$. Now let $z \in L^{2}(\State, \mu)$. Since $L^{2}(\State, \mu)$ can be decomposed into the direct sum of $\text{im}(\Pi)$ and $\text{im}(\Pi)^{\perp}$, then there exists $z_1 \in \text{im}(\Pi)$ and $z_2 \in \text{im}(\Pi)^{\perp}$ such that $z = z_1 + z_2$. Applying $\Pi^{*}$ to $z$ then gives
    \begin{align*}
    \Pi^{*} z = \Pi^{*}z_1 ~.
    \end{align*}
    Thus since $z_1 \in \text{im}(\Pi)$, which is finite-dimensional, the image of $\Pi^{*}$ must also have the same dimensions. 
\end{proof}

\subsection*{Proposition \ref{ABb} Proof}

\begin{proof}
    Since $\Pi$ projects orthogonally to $\text{im}(\Pi^{*})$ and onto $\text{im}(\Pi)$, we have that $\T \hat{V} - \Pi \T \hat{V} \perp_{\mu} \text{im}(\Pi^{*})$. So for all $i = 1, \ldots, n, ~ \inp{\psi_i}{\T \hat{V} - \Pi \T \hat{V}}_{\mu} = 0$. Substituting $\hat{V}$ for $\Pi \T \hat{V}$, expanding and re-arranging gives
    \begin{align*}
    0 &= \inp{\psi_i}{\T\hat{V} - \hat{V}}_{\mu}\\
    &= \inp{\psi_i}{  \left( R + \gamma P_T^{(\lambda)} \phi^{\top}w \right) - \phi^{\top} w }_{\mu}\\
    &= \inp{\psi_i}{R}_{\mu} - \sum_{j=0}^{k} \inp{\psi_i}{\left( \phi_j - \gamma P_T^{(\lambda)}\phi_j \right) w_j}_{\mu}\\
    &= \inp{\psi_i}{R}_{\mu} - \sum_{j=0}^{k} \inp{\psi_i}{\left( I - \gamma P_T^{(\lambda)} \right)\phi_j w_j}_{\mu}~.
    \end{align*}
    Re-arranging now gives
    \begin{align*}
    \sum_{j=0}^{k} \inp{\psi_i}{\left( I - \gamma P_T^{(\lambda)} \right)\phi_j w_j}_{\mu} = \inp{\psi_i}{R}_{\mu} ~. 
    \end{align*}
    Then given our definitions of $A, B$, and $b$, we have a linear system of $k$ equations given by $\left(A - \gamma B\right) w = b$.
\end{proof}

\subsection*{Theorem \ref{ambig_5} Proof}
\begin{proof}
	By Corollary \ref{ambig_2}, ambiguity holds if and only if there exists a$T$ and $0 \not\equiv \varphi \in \text{span}(\Phi)$ such that for all $i$
	\begin{align}
	\int_{s \in S} \chi_i(s) \varphi(s) ds = \gamma \int_{s \in S} \chi_i(s) P^{(\lambda)}_T \varphi(s) ds~. \label{eqn_a}
	\end{align}
	We define $\tilde{P}_T^{(\lambda)}$ as
	\begin{align*}
		\tilde{P}_T^{(\lambda)} \coloneqq  \frac{1 - \lambda\gamma}{1-\lambda} P_T^{(\lambda)}~. 
	\end{align*}
	The $\tilde{P}_T^{(\lambda)}$ operator can be viewed as a re-normalised version of $P_T^{(\lambda)}$. It can then be seen that $\tilde{P}_T^{(\lambda)} \varphi(s)$ is bound between $\varphi_{\min}$ and $\varphi_{\max}$ since
	
	\begin{align*}
		&\tilde{P}_T^{(\lambda)} \varphi_{\min} = \frac{1 - \lambda\gamma}{1-\lambda} P_T^{(\lambda)} \varphi_{\min} = \varphi_{\min}~,
	\end{align*}
	\noindent
	and
	\begin{align*}
		&\tilde{P}_T^{(\lambda)} \varphi_{\max} = \frac{1 - \lambda\gamma}{1-\lambda} P_T^{(\lambda)} \varphi_{\max} = \varphi_{\max}~.
	\end{align*}
	\noindent 
	Now (\ref{eqn_a}) can be re-written as
	\begin{align*}
		\int_{s \in S} \chi_i(s) \varphi(s) ds = G \int_{s \in S} \chi_i(s) \tilde{P}^{(\lambda)}_T \varphi(s)~.
	\end{align*}
	So in the forward direction, we can simply take $f(s) = \tilde{P}^{(\lambda)}_T \varphi(s)$. Now in the reverse direction, we note that any function $f: \State \to [\varphi_{\min}, \varphi_{\max}]$ can be represented as $\tilde{P}^{(\lambda)}_T \varphi(s)$ for some $T$. Thus the result holds. 
\end{proof}

\subsection*{Theorem \ref{thm_Main} Proof}
We look to construct a function $f$ that satisfies Theorem \ref{ambig_5} as well as
\begin{align}
    \int_{s \in \State} \chi_i(s) \varphi(s) ds = G \int_{s \in \State} \chi_i(s) f(s) ds \label{eqn_rep}
\end{align}
for $i = 1, \ldots, n$. We show that such a function $f$ exists in each case if and only if there exists $0 \not\equiv \varphi \in \text{span}(\Phi)$ that has non-flat extrema. Then by the equivalence given in Theorem \ref{ambig_5}, ambiguity holds. Taking the contrapositive then gives Theorem \ref{thm_Main}. We first look to find a function $f$ that satisfies (\ref{eqn_rep}) and the upper bound on the range.

\hfill \newline
Consider $\tilde{f}(s) = \frac{1}{G} \varphi(s)$. For $\varphi_{\max} < 0$, $\tilde{f}$ clearly satisfies (\ref{eqn_rep}) and the upper bound on the range; but for $\varphi_{\max} > 0$, $\tilde{f}$ exceeds the upper bound. Instead, we look to construct a function from $\tilde{f}$ whereby we `cut' off the portion that exceeds the upper bound and project it across the functions $\chi_i$ for $i = 1, \ldots, n$ to satisfy the upper bound whilst still satisfying (\ref{eqn_rep}). Consider
\begin{align*}
	\bar{f}(s) \coloneqq \frac{1}{G} \varphi(s) - \delta(s)~,
\end{align*}
where $\delta(s) \coloneqq \max\{ 0, \frac{1}{G} \varphi(s) - G \varphi_{\max} \}$ is the cut pinnacle. By construction, $\bar{f}$ satisfies the upper bound. Now let us define
\begin{align*}
	f(s) \coloneqq \bar{f}(s) + g(s) ~.
\end{align*}
We look to find $g$ such that $f$ satisfies (\ref{eqn_rep}) and the upper bound. The following derivation derives a linear system of equations that constrain the function $g$ such that $f$ satisfies (\ref{eqn_rep}). Starting from (\ref{eqn_rep}), we have
\begin{align*}
	\int_{S} \chi_i(s) \varphi(s) ds &= G \int_{S} \chi_i(s) f(s) ds\\
	&\overset{(a)}{=} \gamma \int_{S} \chi_i(s) \left( \frac{1}{G} \varphi(s) - \delta(s) + g(s) \right) ds ~.
\end{align*}
Here (a) follows by the definition of $f$ and $\bar{f}$. Cancelling out $\int_{S} \chi_i(s) \varphi(s) ds$ and $G$ from both sides of the equation gives
\begin{align}
	\int_{S} \chi_i(s) \delta(s)ds = \int_{S} \chi_i(s) g(s)ds ~. \label{eqn_21}
\end{align}
We now note that the functions $\chi_1, \ldots, \chi_n$ forms a linearly independent set. To see this, suppose that for all $s \in \State$
\begin{align*}
	b_1 \chi_1(s) + b_2 \chi_2(s) + \ldots + b_n \chi_n(s) = 0 
\end{align*}
and $b_1 , \ldots b_n$ are not all equal to 0. Then since $\chi_i(s) = \mu(s) \psi_i(s)$, and $\mu(s) > 0$, we must have
\begin{align*}
	b_1 \psi_1(s) + b_2 \psi_2(s) \ldots + b_n \psi_n(s) = 0 ~.
\end{align*}
This is a contradiction since the set of functions $\Psi = \{ \psi_1, \ldots, \psi_n \}$ is a linearly independent set. Thus, let $g$ be given by
\begin{align*}
	g(s) = \sum_{j = 1}^{n} \chi_j(s) a_j ~, \forall s \in \State 
\end{align*}
where $a_i \in \R$ for all $i$. Then from (\ref{eqn_21}) we have
\begin{align*}
	\int_{S} \chi_i(s) \delta(s)ds &= \int_{\State} \chi_i(s) \sum_{j = 1}^{n} \chi_j(s) a_j ds\\
	&\overset{(a)}{=} \sum_{j = 1}^{n} a_j \int_{\State} \chi_i(s) \chi_j(s) ds		
\end{align*}
where in (a) we swapped the summation and the integrand by the Fubini-Tonelli theorem. Now as a notational shorthand, let $\inp{f}{g} \coloneqq \int_{\State} f(s) g(s) ds$. Then we have
\begin{align*}
	\sum_{j = 1}^{n} a_j \int_{\State} \chi_i(s) \chi_j(s) ds = \sum_{j = 1}^{n} a_j \inp{\chi_i}{\chi_j}~.
\end{align*}
Let $X \in \R^{n \times n}$ and $\bar{\delta} \in \R^{n}$ be defined by
\begin{align*}
	X_{ij} &= \inp{\chi_i}{\chi_j} ~, \quad ~ i, j = 1, \ldots, n \\
	\bar{\delta}_i &= \inp{\chi_i}{\delta} ~, \quad ~ i = 1, \ldots, n
\end{align*}
respectively. Together $a$, X and $\bar{\delta}$ form a system of linear equations given by
\begin{align*}
	X a = \bar{\delta}
\end{align*}
Note that since $\chi_1, \ldots, \chi_n$ is a set of linearly independent functions, $X$ is full rank and thus invertible. We can express $a$ as
\begin{align*}
	a = X^{-1} \bar{\delta} ~.
\end{align*}
Let $\chi(s) = \left( \chi_1(s), \ldots, \chi_n(s) \right)$. We can now express $g$ as
\begin{align*}
	g(s) = \chi(s) X^{-1} \bar{\delta} ~, \quad \forall s \in \State ~.
\end{align*}
We now explicitly define three infinity norms we will use to bound $g$. For a function $f(s) = (f_1(s), \ldots, f_n(s))$, vector $u \in \R^{n}$, and matrix $A \in \R^{n \times n}$, the norms are given by
\begin{align*}
	\norm{f}_{\infty} &\coloneqq \max_{1 \leq i \leq n} \sup_{s \in \State} \abs{f_i(s)} ~,\\
	\norm{u}_{\infty} &\coloneqq \max_{1 \leq i \leq n} \abs{u_i} ~,\\
	\norm{A}_{\infty} &\coloneqq \sup_{y \neq 0} \frac{\norm{Ay}_{\infty}}{\norm{y}_{\infty}} = \max_{j} \sum_{i=1}^{n} \abs{A_{ij}} ~.
\end{align*}
Under these norm definitions, we see that $\norm{X^{-1}}_{\infty} < \infty$ since $X$ is invertible. The function $\chi$ has its infinity norm given by $\norm{\chi}_{\infty} = \max_{1 \leq i \leq n} \sup_{s \in \State} \abs{\chi_i(s)}$. To see that this is finitely bounded, recall that for any $i$, $\int_{\State} \chi_i(s) ds = 1$. We can then split the $\chi_i$ into two functions $\chi_i^{+}$ and $\chi_i^{-}$ where $\chi_i^{+}$ is the same value as $\chi_i$ when it is positive and $\chi_i^{-}$ is the same value as $\chi_i$ when it is negative. Then since
\begin{align*}
	\int_{\State} \chi_i(s) ds = \int_{\State} \chi_i^{+}(s) - \int_{\State} \chi_i^{-}(s) ds = 1 ~,
\end{align*}
it must be the case that both individual integrals are finite. Thus $\abs{\chi_i(s)} < \infty$ for all $i$ and $s$. We now look to derive a bound for $\bar{\delta}$. Consider the set $\mathcal{N}_{G^{2}}$ given by
\begin{align*}
	\mathcal{N}_{G^{2}} \coloneqq \left\{ s \in \State : \varphi(s) \geq G^{2} \varphi_{\max} \right\} ~.
\end{align*}
Note that $\delta(s) > 0$ if and only if $s \in \mathcal{N}_{G^{2}}$. Let $\mathbf{1}_{\mathcal{N}_{G^{2}}}$ be the characteristic function for $\mathcal{N}_{G^{2}}$. Then for all $i = 1, \ldots, n$, we have the following derivation
\begin{align*}
	\int_{\State} \chi_i(s) &\delta(s) ds \overset{(a)}{=} \int_{\State} \psi_i(s) \delta(s) \mu(s) ds\\
	&\overset{(b)}{\leq} \left( \frac{1}{G} \varphi_{\max} - G \varphi_{\max} \right) \int_{\State} \mathbf{1}_{\mathcal{N}_{G^{2}}}(s) \psi_i(s) \mu(s) ds\\
	&\overset{(c)}{=} \left( \frac{1}{G} \varphi_{\max} - G \varphi_{\max} \right) \psi_i(s_{\max}) \int_{\mathcal{N}_{G^{2}}} \mu(s)ds\\
	&\overset{(d)}{=} \left( \frac{1}{G} \varphi_{\max} - G \varphi_{\max} \right) \psi_i(s_{\max}) \mu\left[ \mathcal{N}_{G^{2}} \right] ~.
\end{align*}
Here (a) follows by definition of $\chi_i$ and (b) follows by definition of $\delta$. In (c), we let $s_{\max}$ denote the value of $s \in \State$ that $\psi_i$ achieves a maximum. Finally, (d) follows by definition of $\mu\left[ \mathcal{N}_{G^{2}} \right]$. Thus, $\bar{\delta}$ can be bounded by
\begin{align*}
	\norm{\bar{\delta}}_{\infty} = \max_{1 \leq i \leq n} \abs{\bar{\delta}} \leq \varphi_{\max} \left( \frac{1}{G} - G \right) \psi_i(s_{\max}) \mu\left[ \mathcal{N}_{G^{2}} \right] ~.
\end{align*}
Combining the bounded quantities, we have that $g$ is bounded by
\begin{align*}
	\norm{g}_{\infty} &\leq \norm{\chi}_{\infty} \norm{X^{-1}}_{\infty} \varphi_{\max} \left( \frac{1}{G} - G \right) \psi_i(s_{\max}) \mu\left[ \mathcal{N}_{G^{2}} \right]\\
	&\leq C \cdot \varphi_{\max} \left( \frac{1}{G} - G \right) \mu\left[ \mathcal{N}_{G^{2}} \right] 
\end{align*}
where to simplify notation we let $C$ be the constant defined as $C = \norm{\chi}_{\infty} \norm{X^{-1}}_{\infty} \psi_i(s_{\max}) $. Note that by definition, $\bar{f}$ is bounded by $G \varphi_{\max}$. As a result, we can now bound $f$ from above by
\begin{align*}
	f(s) &= \bar{f}(s) + g(s)\\
	&\leq G \varphi_{\max} + \norm{g}_{\infty}\\
	&\leq \varphi_{\max} \left( G + C \left(\frac{1}{G} - G \right) \mu\left[ \mathcal{N}_{G^{2}} \right] \right) ~.
\end{align*}
Now $\varphi_{\max} \left( G + C \left(\frac{1}{G} - G \right) \mu\left[ \mathcal{N}_{G^{2}} \right] \right) \leq \varphi_{\max}$ if and only if 
\begin{align}
G + C \left(\frac{1}{G} - G \right) \mu\left[ \mathcal{N}_{G^{2}} \right] \leq 1 ~. \label{eqn_22}
\end{align}
This preceding inequality holds if and only if 
\begin{align*}
\mu\left[ \mathcal{N}_{G^{2}}\right] \leq \frac{1 - G}{C \left(\frac{1}{G} - G \right)} ~,
\end{align*}
which is equivalent to
\begin{align*}
    \mu\left[ \mathcal{N}_{G^{2}} \right] \leq \frac{G}{C\left( 1 + G \right)}~.
\end{align*}
Since $\varphi$ has a non-flat maximum, by Lemma \ref{gamma_1} we have that as $G \to 1$
\begin{align*}
\mu\left[ \mathcal{N}_{G^{2}} \right] \to 0 ~,
\end{align*}
whilst we have as $G \to 1$,
\begin{align*}
\frac{G}{C \left( 1 + G \right)} \to \frac{1}{2 C} ~.
\end{align*}
Thus, $\mu\left[ \mathcal{N}_{G^{2}} \right] \leq \frac{G}{C\left( 1 + G \right)}$ if $\varphi$ has non-flat maximum. Thus $f(s) \leq \varphi_{\max}$ for all $s \in \State$ and $f$ satisfies the upper bound.

By an analogous argument to the above, we can also find a construction to satisfy the lower bound in the different cases and (\ref{eqn_rep}). For $\varphi_{\min} > 0$, $f = \frac{1}{G} \varphi(s)$ satisfies (\ref{eqn_rep}) and the lower bound. For $\varphi_{\min} < 0$, we can find a function $f^{-}$ that satisfies (\ref{eqn_rep}) and has range greater than the lower bound given $\varphi$ has non-flat minimum. We define $f^{-}$ by 
\begin{align*}
	f^{-}(s) = \tilde{f}(s) - g^{-}(s)~, \forall s \in \State
\end{align*}
where $g^{-}$ is some function chosen such that $f^{-}$ satisfies (\ref{eqn_rep}). The function $\tilde{f}(s)$ is given by
\begin{align*}
	\tilde{f}(s) = \frac{1}{G}\varphi(s) + \delta^{-}(s) ~,
\end{align*}
where $\delta^{-}(s)$ is given by
\begin{align*}
	\delta^{-}(s) = \max \{0, G \varphi_{\min} - \frac{1}{G} \varphi(s) \} ~.
\end{align*}
By construction, $\tilde{f}$ satisfies the lower bound. We now look to derive a system of linear equations to constrain $g^{-}$ such that (\ref{eqn_rep}) is satisfied. Starting from (\ref{eqn_rep}) we have
\begin{align*}
	\int_{\State} \chi_i(s) \varphi(s) ds &= G \int_{\State} \chi_i(s) f^{-}(s) ds\\
	&= G \int_{\State} \chi_i(s) \left( \frac{1}{G}\varphi(s) + \delta^{-}(s) - g^{-}(s) \right) ds ~.
\end{align*}
Cancelling out from both sides $\int_{\State} \chi_i(s) \varphi(s)$ ds and re-arranging gives
\begin{align*}
	\int_{\State} \chi_i(s) \delta^{-}(s) ds = \int_{\State} \chi_i(s) g^{-}(s) ds ~.
\end{align*}
We let $g^{-}$ be given by
\begin{align*}
	g^{-}(s) = \sum_{j=1}^{n}\chi_j(s) a_j ~.
\end{align*}
Let $\bar{\delta}^{-} \in \R^{n}$ be given by
\begin{align*}
	\bar{\delta}^{-} = \inp{\chi_i}{\delta^{-}}~, \quad i = 1, \ldots, n ~.
\end{align*}
Then $g^{-}$ is given by
\begin{align*}
	g^{-}(s) = \chi(s) X^{-1} \delta^{-} ~.
\end{align*}
We now look to bound $\bar{\delta}^{-}$. Let $\mathcal{N}^{-}_{G^{2}} \coloneqq \{ s \in \State: \varphi(s) \leq G^{2} \varphi_{\min}\}$. For all $i = 1, \ldots, n$ we have the following derivation
\begin{align*}
	\int_{\State} \chi_i(s) \delta^{-}(s) ds &= \int_{\State} \psi_i(s) \delta^{-}(s) \mu(s) ds\\
	&\overset{(a)}{\leq}  \varphi_{\min}\left( G - \frac{1}{G} \right) \int_{\mathcal{N}^{-}_{G^{2}}} \psi_i(s) \mu(s) ds\\
	&\overset{(b)}{\leq} \varphi_{\min}\left( G - \frac{1}{G} \right) \psi_i(s_{\min}) \int_{\mathcal{N}^{-}_{G^{2}}} \mu(s) ds\\
	&\overset{(c)}{=} \varphi_{\min}\left( G - \frac{1}{G} \right) \psi_i(s_{\min}) \mu\left[\mathcal{N}^{-}_{G^{2}}\right] ~.
\end{align*}
In (a), we used the fact that $\delta^{-}(s)$ is only greater than 0 for $s \in \mathcal{N}^{-}_{G^{2}}$ and that it is upper bounded by $\varphi_{\min}\left( G - \frac{1}{G} \right)$. In (b) we define $s_{\min}$ as the value of $s \in \State$ where $\psi_i$ achieves a minimum. Finally, (c) follows by definition of $\mu\left[\mathcal{N}^{-}_{G^{2}}\right]$. Having bound $\int_{\State} \chi_i(s) \delta^{-}(s) ds$ for all $i$, we have that $\bar{\delta}^{-}$ is bound in the infinity norm
\begin{align*}
	\bar{\delta}^{-} &\leq \varphi_{\min}\left( G - \frac{1}{G} \right) \psi_i(s_{\min}) \mu\left[\mathcal{N}^{-}_{G^{2}}\right]~.
\end{align*}
Thus, $g^{-}(s)$ is bounded in the infinity norm by
\begin{align*}
	\norm{g^{-}}_{\infty} &\leq \norm{\chi}_{\infty} \norm{X^{-1}}_{\infty} \norm{\delta^{-}}_{\infty}\\
	&= D \varphi_{\min}\left( G - \frac{1}{G}\right) \mu\left[\mathcal{N}^{-}_{G^{2}}\right]
\end{align*}
where to simplify notation we define the constant $D \coloneqq \norm{\chi}_{\infty} \norm{X^{-1}}_{\infty} \psi_i(s_{\min})$. Given that $\tilde{f}(s)$ is bound below by $G \varphi_{\min}$, we have that $f^{-}(s)$ is bounded as follows.
\begin{align*}
	f^{-}(s) &= \tilde{f}^{-}(s) - g^{-}(s)\\
	&\geq G \varphi_{\min} - D \varphi_{\min}\left( G - \frac{1}{G}\right) \mu\left[\mathcal{N}^{-}_{G^{2}}\right]~.
\end{align*}
Now $G \varphi_{\min} - D \varphi_{\min}\left( G - \frac{1}{G}\right) \mu\left[\mathcal{N}^{-}_{G^{2}}\right] \geq \varphi_{\min}$ if and only if
\begin{align*}
	G + D \left(\frac{1}{G} + G \right) \mu\left[\mathcal{N}^{-}_{G^{2}}\right] \leq 1 ~.
\end{align*}
since $\varphi_{\min} < 0$. Re-arranging shows that this inequality holds if and only if
\begin{align*}
	\mu\left[\mathcal{N}^{-}_{G^{2}}\right] &\leq \frac{1 - G}{D \left(\frac{1}{G} - G \right)}\\
	&= \frac{G}{D \left(1 + G \right)} ~.
\end{align*}
Since $\varphi$ has non-flat minimum, we have that $\mu\left[\mathcal{N}^{-}_{G^{2}}\right] \to 0$ as $G \to 1$ whereas $\frac{G}{D \left(1 + G \right)} \to \frac{1}{2D}$. Thus, the inequality holds if $\varphi$ has non-flat minimum.

Note that $- \delta(s) + g(s)$ is only non-zero on $\mathcal{N}_{G^{2}}$ and $\delta^{-}(s) - g^{-}(s)$ is only non-zero on $\mathcal{N}^{-}_{G^{2}}$ and the two sets do not intersect. Thus, combining the functions $f$ and $f^{-}$ we have
\begin{align*}
	f(s) \coloneqq \frac{1}{\gamma} \varphi(s) - \delta(s) + g(s) + \delta^{-}(s) - g^{-}(s)
\end{align*}
that satisfies the range and (\ref{eqn4}) if $\varphi$ has non-flat extrema. Thus, ambiguity holds if and only if there exists a $0 \not\equiv \varphi \in \text{span}(\Phi)$ with non-flat extrema. Taking the contrapositive proves Theorem \ref{thm_Main}.

\subsection*{Theorem \ref{thm_ProjMain} Proof}
By Corollary \ref{ambig_2}, we have that ambiguity holds if and only if there exists a $T$ and $0 \not\equiv \varphi \in \text{span}(\Phi)$ such that for $i = 1,\ldots, n$, $\int_{\State} \chi_i(s) \varphi(s) ds = \gamma \int_{\State} \chi_i(s) P_T^{(\lambda)}\varphi(s) ds$. By expanding $P_T^{(\lambda)}$ we get
\begin{align*}
    P_T^{(\lambda)} \varphi = (1-\lambda) \sum_{m=0}^{\infty} (\gamma \lambda)^{m} P_T^{m+1} \varphi ~.
\end{align*}
The function $\varphi$ is bounded between $\varphi_{\min}$ and $\varphi_{\max}$. Furthermore, we have that $P_T \varphi_{\max} = \varphi_{\max}$ and $P_T \varphi_{\min} = \varphi_{\min}$. Thus for any $k > 0$ the following holds:
\begin{align}
    \varphi_{\min} \leq P_T^{k} \varphi \leq \varphi_{\max}~. \label{f1}
\end{align}
Furthermore, we note that
\begin{align*}
    (1- \lambda) \sum_{m=0}^{\infty} (\lambda \gamma)^{m} \varphi_{\min} = \frac{(1-\lambda)}{1-\gamma \lambda} \varphi_{\min}~,\\
    (1- \lambda) \sum_{m=0}^{\infty} (\lambda \gamma)^{m} \varphi_{\max} = \frac{(1-\lambda)}{1-\gamma \lambda} \varphi_{\max}
\end{align*}
by the geometric series. Then using the inequality in \ref{f1}, $P_T^{(\lambda)}$ is bounded by:
\begin{align*}
    \frac{(1-\lambda)}{1-\gamma \lambda} \varphi_{\min} \leq P_T^{(\lambda)} \varphi \leq \frac{(1-\lambda)}{1-\gamma \lambda} \varphi_{\max} ~. 
\end{align*}
Now note that 
\begin{align*}
    \int_{\State} \chi_i(s) \varphi_{\min}(s) ds &= \varphi_{\min} \int_{\State} \psi_i(s) \mu(s) ds\\
    &= \varphi_{\min}
\end{align*}
where the last equality holds due to Assumption \ref{ass_Gen}. Similarly, $\int_{\State} \chi_i(s) \varphi_{\max}(s) ds = \varphi_{\max}$. Thus the following holds:
\begin{align*}
    G \varphi_{\min} \leq
    \gamma \int_{\State} \chi_i(s) P_T^{(\lambda)} \varphi(s) ds \leq G \varphi_{\max} ~. 
\end{align*}
Now we note that,
\begin{align*}
    \gamma \int_{\State} \chi_i(s) P_T^{(\lambda)} \varphi(s) ds &\overset{(a)}{=} \int_{\State} \chi_i(s) \varphi(s)ds \\
    &\overset{(b)}{=} \int_{\State} \psi_i(s) \varphi(s) \mu(s) ds \\
    &= \inp{\psi_i}{\varphi}_{\mu} ~.
\end{align*}
Here (a) follows by Corollary \ref{ambig_2} and (b) follows by definition of $\chi_i$. Substituting in $\inp{\psi_i}{\varphi}_{\mu}$ give us that ambiguity holds if and only if there exists $0 \not\equiv \varphi \in \text{span}(\Phi)$ such that for all $i$, 
\begin{align*}
    G \varphi_{\min} \leq \inp{\psi_i}{\varphi}_{\mu} \leq G \varphi_{\max} ~.
\end{align*}
Taking the contrapositive of this statement gives the desired result. 

\newpage
\section{List of Notation}

\renewcommand{\thefootnote}{\fnsymbol{footnote}}

In the following, let $\mathcal{H}$ and $\mathcal{W}$ be two vector spaces and $A: \mathcal{H} \to \mathcal{W}$ a linear transformation. 
\\

\noindent\textbf{Notation}

\newcommand{\nttn}[2]{\item[{\ \makebox[3.18cm][l]{#1}}]{#2}}
\begin{list}{}{ \setlength{\leftmargin}{3.4cm}
                \setlength{\labelwidth}{3.4cm}}

\nttn{RL} {Reinforcement Learning.}

\nttn{MDP} {Markov Decision Process.}

\nttn{$TD$} {Temporal difference.}

\nttn{$\State$}{The state space.}

\nttn{$\gamma$} {The discount factor, i.e. $\gamma \in [0, 1)$.}
\nttn{$G$}{The $\lambda$-weighted discount factor given by $G \coloneqq \frac{(1-\lambda)\gamma}{1-\lambda\gamma}$. }

\nttn{$G \to 1$} {$G$ approaches a value of 1.}

\nttn{$\mu$} {The stationary distribution.}

\nttn{$\T$} {The Bellman operator.}

\nttn{$\Phi$} {The set of chosen features: $\Phi \coloneqq \{ \phi_1, \phi_2, \ldots, \phi_k \}$, $k \in \N$. }

\nttn{$\phi(s)$} {A feature vector for a given state $s \in \State$: $\phi(s) = (\phi_1(s), \ldots, \phi_k(s))^{\top}$.}

\nttn{$\varphi_{\min}$}{$\varphi_{\min} \coloneqq \min_{s \in \State} \varphi(s)$. }

\nttn{$\varphi_{\max}$}{$\varphi_{\max} \coloneqq \max_{s \in \State} \varphi(s)$. }

\nttn{$T$}{The transition density function.}

\nttn{$P_T$}{An operator such that for a function $f: \State \to \R$, $(P_T f)(s) \coloneqq \int_{\State} T(s'|s) f(s') ds'$.}

\nttn{$P_T^{(\lambda)}$}{An operator defined as $P_T^{(\lambda)} \coloneqq (1-\lambda) \gamma \sum_{m=0}^{\infty} (\lambda \gamma)^{m} (P_T)^{m+1}$. }

\nttn{$\inp{f}{g}_{\mu}$}{An inner product defined as $\inp{f}{g}_{\mu} \coloneqq \int_{\State} f(s)g(s)\mu(s) ds$.}

\nttn{$\norm{f}_{\mu}$}{The norm defined by the inner product $\inp{f}{g}_{\mu}$, i.e. $\norm{f}_{\mu} \coloneqq \sqrt{\inp{f}{f}_{\mu}}$.}

\nttn{$\norm{f}_{1,\mu}$}{$\norm{f}_{1,\mu} \coloneqq \int_{\State} f(s)\mu(s)ds$. }

\nttn{$\R$}{The set of real numbers.}

\nttn{$\text{im}(A)$}{The image of $A$, that is the set $\text{im}(A) \coloneqq \{ Ax : x \in \mathcal{H} \}$. }

\nttn{$\text{ker}(A)$}{The kernel of $A$, that is the set $\text{ker}(A) \coloneqq \{ x \in \mathcal{H} : Ax = 0 \}$.}

\nttn{$\text{span}(B)$}{The span of $B$.}

\nttn{$\text{dim}(\mathcal{H})$}{The dimension of $\mathcal{H}$.}

\nttn{$x \perp y$}{x is perpendicular to y.}

\nttn{$x \perp_{\mu} y$}{Given an inner product $\inp{\cdot}{\cdot}_{\mu}$, this notation denotes that $x$ is perpendicular to $y$ with respect to $\inp{\cdot}{\cdot}_{\mu}$.}

\nttn{$\mu\left[ \R \right]$}{$\mu\left[ \R \right] \coloneqq \int_{\R} \mu(s) ds$.}

\nttn{$\mathbf{1}_{B}(s)$}{The characteristic function for the set $B$.}

\nttn{$\mathbf{1}(s' = s)$}{The indicator function.}

\end{list}

\hfill \newpage
%\bibliographystyle{alpha}
%\bibliography{ijcai19}
\newcommand{\etalchar}[1]{$^{#1}$}

\end{document}